\documentclass[11pt]{article}

\usepackage[preprint]{acl}

\usepackage{times}
\usepackage{latexsym}

\usepackage[T1]{fontenc}

\usepackage[utf8]{inputenc}

\usepackage{microtype}

\usepackage{inconsolata}

\usepackage{algorithm}
\usepackage{algorithmic}
\usepackage{caption}
\usepackage{graphicx}
\usepackage{amsmath}
\usepackage{amssymb} 
\usepackage{booktabs}
\usepackage{multirow}
\usepackage[most]{tcolorbox}
\usepackage{enumitem}
\usepackage{makecell}
\usepackage{tabularx}
\usepackage{amsthm}
\usepackage{placeins}
\usepackage{stfloats}

\newlist{scorelist}{enumerate}{1}
\setlist[scorelist]{
  label=\textbf{\arabic*:},
  leftmargin=2em,
  labelsep=0.4em,
  itemsep=0pt,
  parsep=0pt,
  partopsep=0pt,
  topsep=0.2em
}

\newtheorem{assumption}{Assumption}
\newtheorem{theorem}{Theorem}
\newtheorem{lemma}{Lemma}

\newcommand{\KL}{D_{\mathrm{KL}}}
\newcommand{\ModelName}{{ToSCA}}

\newtcolorbox{promptbox}[1]{
  enhanced jigsaw,
  breakable,
  colback=white,
  colframe=black!65,
  colbacktitle=white,
  coltitle=black,
  fonttitle=\bfseries\small,
  fontupper=\footnotesize,
  boxrule=0.45pt,
  arc=0pt,
  left=1.5mm,
  right=1.5mm,
  top=0.7mm,
  bottom=0.7mm,
  before skip=3pt,
  after skip=5pt,
  title=#1
}

\title{ToSCA: Leveraging Hierarchical Reinforcement Learning on Temporal and Strategic Abstractions of Conversational Agents}

\author{
 \textbf{Xiaoyu Wang\textsuperscript{1,2,\thanks{This work is completed during their internships at Geely.},\footnotemark[2]}},
 \textbf{Qingqing Gu\textsuperscript{1,\thanks{The first two authors contribute equally.}}},
 \textbf{Yue Zhao\textsuperscript{1}},
 \textbf{Teng Chen\textsuperscript{1}},
 \\
 \textbf{Yuqi Cao\textsuperscript{1,3,\footnotemark[1]}},
 \textbf{Xiaokai Chen\textsuperscript{2}},
 \textbf{Hongyan Li\textsuperscript{1}},
 \textbf{Luo Ji\textsuperscript{1}},
\\
\\
 \textsuperscript{1}Geely AI Lab,
 \textsuperscript{2}Beijing Institute of Technology,
 \textsuperscript{3}Peking University,
\\
 \small{
   \textbf{Correspondence:} \href{Luo.Ji1@geely.com}{jiluoaaron@hotmail.com} 
 }
}

\begin{document}
\maketitle
\begin{abstract}
Humans naturally exhibit multiple forms of abstraction in reasoning and interaction, including temporal abstraction across decision timescales and strategic abstraction over communicative intents. Inspired by these complementary abstractions, we propose a two-level hierarchical reinforcement learning (HRL) framework for conversational agents that bridges the gap between existing token-level and utterance-level RL methods. Built upon a two-level Markov decision process (MDP), our framework conditions token-level response generation on utterance-level actions represented by explicit textual strategies. Based on theoretical analysis and efficiency considerations, we employ DQN to optimize the high-level Q-network and PPO to train the low-level actor-critic. To further alleviate reward sparsity and facilitate convergence, we introduce a dual-granularity reward mechanism that combines the utterance-level satisfaction score with token-level intrinsic self-consistency and a KL-divergence penalty. Experiments on both daily-life and emotional support conversations demonstrate that our method consistently outperforms a wide range of baselines in both strategy determination and response quality. Our implementation is available at \url{https://github.com/AaronJi/ToSCA}.
\end{abstract}

\section{Introduction}

Large Language Models (LLMs) have achieved remarkable progress in coding, reasoning, and conversational generating  \cite{Ouyang2022ChatGPT,llama3modelcard}. Nevertheless, conversational modeling still poses several challenges, including i) \emph{the limited availability of supervised signals}, ii) alignment with \emph{nuanced and complex human preferences}, and iii) \emph{the broad and diverse distribution of conversational topics}. These factors limit the adaptability and generalization of conventional supervised approaches (\textit{e.g.}, SFT or LoRA) \cite{hedayatniaSystematicEvaluationResponse2022, yueDoesReinforcementLearning2025, leeModelingOnetoManyProperty2026}. Reinforcement learning from human feedback (RLHF) provides a natural alternative by enabling policy exploration beyond fixed demonstrations and direct optimization toward long-term conversational returns, with representative methods including Proximal Policy Optimization (PPO) \cite{Ouyang2022ChatGPT} and Group Relative Policy Optimization (GRPO) \cite{shaoDeepSeekMathPushingLimits2024}.

\begin{figure}
    \centering
    \includegraphics[width=0.99\linewidth]{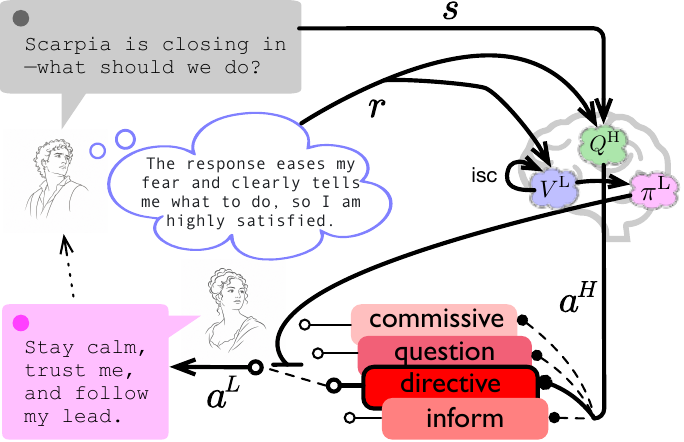}
    \caption{The paradigm of {\ModelName} with two-level MDPs. The high-level Q-network $Q^\text{H}$ considers the user query ($s$), bootstraps the user's satisfaction reward ($r$), and determines the optimal strategy ($a^\text{H}$). The low-level critic $V^\text{L}$ considers both $r$ and the intrinsic self-consistency (isc), guiding the low-level actor $\pi^\text{L}$. The \textit{switch} $\bot$ over the \textit{contacts} $\multimap$ denotes the `options' mechanism in which the discrete high-level action conditions the low-level action ($a^\text{L}$), as the detailed response.}
    \label{fig:paradigm}
\end{figure}


Despite their success, most existing reinforcement learning (RL) approaches operate at the token level \cite{jaques-etal-2020-human,Ouyang2022ChatGPT}, which can lead to sparse rewards and substantial computational overhead \cite{pmlr-v235-zhou24t}. These challenges become particularly pronounced in multi-turn conversations, where \emph{meaningful user feedback is often unavailable until the assistant completes an entire utterance} \cite{liDialogueActionTokens2024}. As a result, exploration over the large token-level action space becomes prohibitively expensive, making it difficult for LLM-based agents to learn effective and human-like conversational behaviors.


One possible remedy is inspired by the hierarchical nature of human cognition \cite{Murray2014hierarchyofintrinsictimescales,Zeraati2023Intrinsictimescalesinthevisualcortex}. In conversation, humans often first determine a high-level communicative strategy, such as \emph{commissive}, \emph{directive}, \emph{question}, or \emph{inform}, then formulate a concrete response accordingly. Such strategies have been systematically studied and annotated in conversation corpora based on sociological or psychological theories, such as the Helping Skills Theory \cite{Hill2009Helping} in ESConv \cite{liu2021ESconv}, and the ISO 24617-2 annotation scheme \cite{conf/lrec/PetukhovaMB14} in DailyDialog \cite{li-etal-2017-dailydialog}. This abstraction mechanism motivates a variety of strategy-aware approaches, including prompting methods \cite{10.1007/978-981-92-1926-1_10}, supervised learning \cite{10.5555/3666122.3666237,Liu2024APR}, retrieval frameworks \cite{yang2024BOT}, and simulation-based studies \cite{zhou-etal-2024-think}. In comparison, RL has been less explored for explicitly modeling the hierarchy between strategic decisions and concrete token-level generation.

Hierarchical Reinforcement Learning (HRL) \cite{10.1016/S0004-3702(99)00052-1, mcgovern01} provides a natural framework for such hierarchical decision-making, and has addressed analogous challenges in robotics and control through temporal abstraction and action-space reduction \cite{shiHiRobotOpenEnded2025,sunHierarchicalReinforcementLearning2025,watanabeHierarchicalReinforcementLearning2025}. However, directly transferring conventional HRL methods to human conversation remains challenging due to fundamental differences in state and action representations. Recent RL methods therefore adopt different forms of abstraction, but each leaves part of this hierarchy under-modeled. Utterance-level approaches such as straQ* \citep{wang-etal-2025-convert}, DAT \cite{li2024dialogue}, and hierarchical action exploration \cite{cho-etal-2024-deep} introduce high-level planning or coarse-grained actions, but do not directly optimize token-level generation policies. In contrast, ArCHer \cite{pmlr-v235-zhou24t} incorporates both utterance-level and token-level optimization within an HRL framework, but represents its high-level intents in a continuous latent space without explicit textual semantics, limiting interpretability and explicit alignment between human-readable strategies and token-level actions.
To jointly realize explicit strategic abstraction and token-level policy optimization, we propose a novel dual-level HRL framework called \textbf{T}emp\textbf{o}ral-\textbf{S}trategic Abstractions of \textbf{C}onversational \textbf{A}gents (\textbf{{\ModelName}}), which integrates an utterance-level strategic planner with token-level policy optimization. {\ModelName} follows the \textbf{options} framework \cite{10.1016/S0004-3702(99)00052-1,stolle03}, where low-level actions (response tokens) are conditioned on high-level actions (strategies), as illustrated in Figure \ref{fig:paradigm}. Based on a two-level MDP formulation, we employ a high-level critic over a discrete strategy space, optimized with DQN, and a low-level actor-critic architecture optimized with PPO. To alleviate reward sparsity and improve hierarchical consistency, we further introduce a dual-granularity reward mechanism, where the low-level reward combines the utterance-level user satisfaction score with two token-level auxiliary components: the K-L penalty and intrinsic self-consistency. We train and evaluate {\ModelName} on two conversational domains, \emph{daily-life conversation} (DailyDialog) and \emph{emotional support conversation} (ESConv), and further conduct an out-of-domain evaluation on EmpatheticDialogues. {\ModelName} consistently outperforms prompting, supervised, and RL baselines, demonstrating strong strategic planning and generalization across domains. Major contributions of this paper include:
\begin{itemize}
\item We propose an HRL-based dialogue framework that bridges explicit utterance-level strategic planning and token-level response generation within a unified architecture.
\item Besides utterance-level satisfaction, we introduce intrinsic self-consistency to form a dual-granularity reward that alleviates reward sparsity and facilitates policy optimization.
\item We conduct extensive experiments on daily-life and emotional support conversations to demonstrate the effectiveness and out-of-domain generalization of our method.
\end{itemize}


\section{Preliminaries}
\label{sec:preliminary}

\subsection{Markov Decision Process} 

The Markov decision process (MDP) is usually defined as a 5-tuple $(\mathcal{S}, \mathcal{A}, \mathcal{R}, \mathcal{T}, \gamma)$, where $\mathcal{S}$ is the state set, $\mathcal{A}$ is the action set, $\mathcal{R}$ is the reward set, $\gamma$ is the discounting factor of rewards, and $\mathcal{T}: \mathcal{S} \times \mathcal{A} \rightarrow \mathcal{S}$ is the state transition function.

At a specific time $t$, Reinforcement Learning (RL) can be employed to provide a policy $\pi(a|s)$ by optimizing the following discounted cumulative expected return ($J$):
\begin{align}
   &\max_{\pi}{ \mathbb{E}_{s, a \sim \mathcal{T}, \pi} \sum_{t=0}^{\infty} \gamma^{t} r_{t} } := J(\mathcal{M}) \label{eq:J} \\
   &s \in \mathcal{S}, a \in \mathcal{A}, r \in \mathcal{R} \notag
\end{align}

\subsection{Deep Q-learning}

Q-Learning belongs to value-based RL, which aims to learn the optimal state-action value function $Q^{*}(s, a)$, such that the determined action approximately optimizes $J$ by maximizing $Q$. Instead of explicitly implementing the above equation, Deep Q-learning (DQN) \cite{Mnih2015DQN} approximates the minimization of the error of the Bellman Equation with deep value networks: 
\begin{align}
    &\mathcal{L}_{Q}(\phi) = \notag \\
    &| r(s,a) + \gamma \max_{a'} Q_{\bar{\phi}}(s', a') - Q_{\phi}(s, a)|^2 \label{eq:dqn}  
\end{align}
where $\phi$ and $\bar{\phi}$ are parameters of Q and target Q-nets. $\bar{\phi}$ is periodically synchronized from $\phi $. 

\subsection{Proximal Policy Optimization} 


Proximal policy optimization (PPO)~\citep{schulman2017proximal} follows the Actor-Critic framework and is widely applied to LLMs~\citep{Ouyang2022ChatGPT}. Initialized from an LLM backbone, the \emph{actor} behaves as a generative policy network, while a separate value network serves as the \emph{critic}. The actor is trained by maximizing the objective
\begin{align}
\mathcal{L}_{\pi}(\theta)
=&\;
\mathbb{E}_{(s,a)\sim\mathcal{T},\,\pi_{\theta_{\text{old}}}} \Big[ \min \big( \frac{\pi_\theta(a|s)}{\pi_{\theta_{\text{old}}}(a|s)}\, A(s,a), \notag \\
&\;
\text{clip}(\frac{\pi_\theta(a|s)}{\pi_{\theta_{\text{old}}}(a|s)}, 1-\epsilon, 1+\epsilon) A(s,a)  \big) \Big] \notag \\
&\;
- \beta_1\,
\mathbb{E}_{s\sim\mathcal{T}}
\Big[ \KL\!\left( \pi_\theta(\cdot|s)\,\Vert\,\pi^{\text{ref}}(\cdot|s) \right) \Big] \notag \\
&\;
+ \beta_2\,
\mathbb{E}_{x\sim\mathcal{D}_{\text{pretrain}}} \Big[ \log \pi_\theta(x) \Big]
\label{eq:rlhf_actor}
\end{align}
where $\theta$ and $\theta_{\text{old}}$ denote the parameters of the current policy and the last policy, respectively; $\epsilon$ is the PPO clipping threshold that constrains excessive policy updates; $A(s,a)$ is the advantage function estimated by the critic; $\pi^{\text{ref}}$ is usually initialized from and fixed as the SFT policy; and $\log \pi_\theta(x)$ is an auxiliary language modeling objective which sums the token-level auto-regressive log-likelihoods. As the sub-objective weights, $\beta_1$ controls the strength of the KL regularization that constrains the policy to remain close to the SFT model, while $\beta_2$ balances the auxiliary pretraining objective. The critic is trained to regress the state-value function using
\begin{align}
\mathcal{L}_{\text{V}}(\psi) = \mathbb{E} \Big[\big(V_{\psi}(s) - A(s,a) - V_{\psi_\text{old}}(s) \big)^2 \Big] \label{eq:rlhf_critic}
\end{align}
where $V_{\psi}$ denotes the state-value estimate, $\psi$ and $\bar \psi$ are its current and previous parameters.

\section{Method}

\subsection{Task Formulation} 

\paragraph{Multi-turn conversation.} A $T$-turn user-agent conversation can be represented by an interleaved sequence of $query(t)$ and $response(t)$, $1 \leq t \leq T$. This paper studies strategy-annotated conversations \cite{liu2021ESconv,rashkin-etal-2019-towards}, with each turn annotated with the user's emotion ($emo$) and the agent's strategy ($stra$). Such an augmented conversation is then described by $\{ desc,\allowbreak hist(T),\allowbreak query(T),\allowbreak
emo(T),\allowbreak stra(T),\allowbreak response(T) \}$, in which $desc$ is the session-level description, and $hist(T) := \{query(t), \allowbreak response(t) \}_{0:T-1}$ is the history context at $t$.

\paragraph{Hierarchical MDP.} Similar to ArCHer \cite{pmlr-v235-zhou24t}, we formulate multi-turn conversation as a hierarchical MDP with temporal abstraction across two decision scales:
\begin{enumerate}
    \item The high-level MDP $\mathcal{M}^\text{H}$: the agent makes strategic decisions at the utterance level and operates on the dialogue-turn timescale indexed by $t$, which guide the low-level actions.
    \item The low-level MDP $\mathcal{M}^\text{L}$: Similar to RLHF, this MDP operates at the token level with token index $k$, where the agent generates a response token by token until completion. 
\end{enumerate}
where the superscripts H and L denote the high and low levels, respectively; $t$ indexes real-time dialogue turns, while $k$ indexes token-level generation steps. Unlike \citet{pmlr-v235-zhou24t}, our high-level actions provide explicit strategic abstraction, which can be learned from and evaluated against strategy-annotated datasets.


\subsection{System Configuration} 

\paragraph{Variable definitions.} According to the environment, we define its system variables as in Table \ref{Tab:define}, where $response_{0:k}$ is the currently generated part of $response$. Based the above definitions, $\mathcal{M}^\text{H}$ and $\mathcal{M}^\text{L}$ can be depicted with tuples of $(\mathcal{S}^{\text{H}}, \mathcal{A}^{\text{H}}, \mathcal{R}^{\text{H}})$ and $(\mathcal{S}^{\text{L}}, \mathcal{A}^{\text{L}}, \mathcal{R}^{\text{L}})$, respectively. 

\begin{table}[t]
\centering
\resizebox{\columnwidth}{!}{%
\begin{tabular}{c l l}
\toprule
Level & Variable & Definition \\
\midrule

\multirow{3}{*}{\rotatebox{90}{\textbf{High}}} 
& $s^{\text{H}} \in \mathcal{S}^{\text{H}}$ 
& \textbf{State} $(query,\; history,\; emo)$ \\

& $a^{\text{H}} \in \mathcal{A}^{\text{H}}$ 
& \textbf{Selected strategy} $(stra)$ \\

& $r^{\text{H}} \in \mathcal{R}^{\text{H}}$ 
& \textbf{Satisfaction score} $(r^{\text{sat}})$ \\

\midrule

\multirow{3}{*}{\rotatebox{90}{\textbf{Low}}} 
& $s^{\text{L}} \in \mathcal{S}^{\text{L}}$ 
& \textbf{State} $(s^{\text{H}},\; response_{0:k})$ \\

& $a^{\text{L}} \in \mathcal{A}^{\text{L}}$ 
& \textbf{Current decoded token} \\

& $r^{\text{L}} \in \mathcal{R}^{\text{L}}$
& \textbf{Token-level reward} \\

\bottomrule
\end{tabular}
}
\caption{Variable definitions in the HRL framework.}
\label{Tab:define}
\end{table}

\paragraph{RL components and objectives.} In this work, we use HRL to jointly optimize the objectives of $\mathcal{M}^\text{H}$ and $\mathcal{M}^\text{L}$. Unlike previous configurations such as Actor$^{\text{H}}$-Critic$^{\text{L}}$ \cite{pmlr-v235-zhou24t}, Actor$^{\text{H}}$-Critic$^{\text{H}}$-Actor$^{\text{L}}$ \cite{10.5555/3298483.3298491}, or double Actor-Critic \cite{NEURIPS2019_4f284803}, our high-level MDP employs a Q-network ($Q^{\text{H}}$) to perform value-based strategic selection over discrete options, while the low-level MDP adopts an actor-critic framework ($\pi^{\text{L}}$, $V^{\text{L}}$), leveraging the robustness of token-level PPO. Accordingly, we refer to our architecture as \textbf{Critic$^{\text{H}}$-Actor$^{\text{L}}$-Critic$^{\text{L}}$}\footnote{The high-level Q-network serves as a value-based critic without an explicit high-level actor.}, with components learned by the following objectives:



\begin{align}
\pi^{\text{H}} \leftarrow \max J(\mathcal{M}^{\text{H}}); \pi^{\text{L}}, Q^{\text{L}} \leftarrow \max J(\mathcal{M}^{\text{L}}) \label{eq:obj}
\end{align}
where the notations are inherited from Section \ref{sec:preliminary}. Figure~\ref{fig:pipeline} exhibits the entire framework. The following sections detail the definition of components.


\begin{figure*}[ht]
    \centering
    \includegraphics[width=\linewidth]{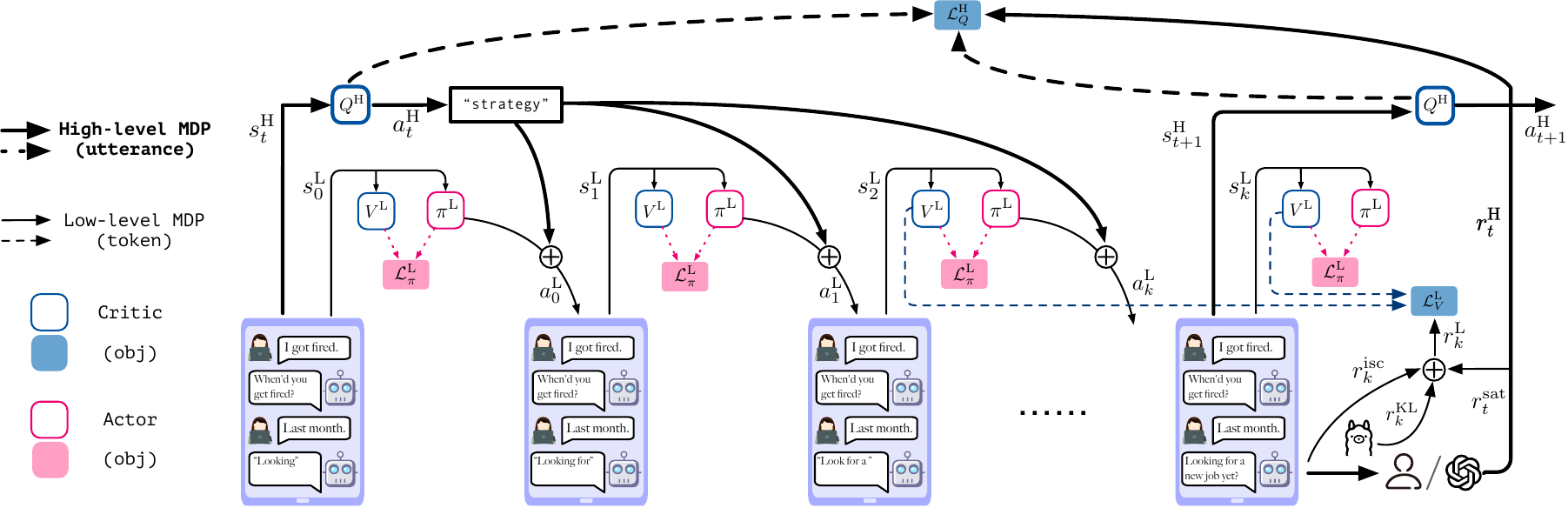}
    \caption{    
    Dual-timescale pipeline of {\ModelName}. The high-level Q-network $Q^\text{H}$ operates at the utterance timescale and selects an optimal $strategy$ at the $t$-th turn. The low-level actor-critic operates at the token timescale and generates the $k$-th token ($a^\text{L}_k$), conditioned on the high-level action $a^\text{H}_t$. The high-level reward ($r^\text{H}_t$) consists solely of the satisfaction score ($r^{\text{sat}}_t$), whereas the low-level reward ($r^\text{L}_k$) additionally incorporates the KL penalty ($r^{\text{KL}}_k$) and intrinsic self-consistency ($r^{\text{isc}}_k$). We optimize $Q^\text{H}$ with the DQN objective ($\mathcal{L}_{Q}^{\text{H}}$), while jointly optimizing $\pi^\text{L}$ and $V^\text{L}$ with PPO objectives $\mathcal{L}_{\pi}^{\text{L}}$ and $\mathcal{L}_{V}^{\text{L}}$, respectively. Solid and dashed lines denote data and gradient flows, respectively.
    }
    \label{fig:pipeline}
\end{figure*}

\subsection{The High-Level MDP} 

\paragraph{Training.} Instead of augmenting the value head (as is done in conventional RLHF), we implement this strategic value function in a purely generative manner. Similar to straQ* \citep{wang-etal-2025-convert}, we calculate the state-action value from the averaged logits of the action tokens, by the conditional forward pass of LLM (with parameters $\phi$):
\begin{align}
    Q_{\phi}^{\text{H}}(s,a) &\leftarrow \text{LLM}_{\phi}(\mathcal{I}^{\text{H}}(s) \oplus a) \label{eq:high_level_Q}
\end{align}
where $\mathcal{I}(s)$ is an instruction template with the placeholder of $s$, and $\oplus$ represents the textual concatenation. We use DQN (Eq. \ref{eq:dqn}) to train $Q_{\phi}^{\text{H}}$.






\paragraph{Prompt.} We provide a simplified version of the high-level instruction $\mathcal{I}^{\text{H}}(s)$ below: 

\tcbset{
  colframe=black!75!white,
  colback=gray!5!white,
  boxrule=0.5pt,
  arc=2mm,
  left=1mm, right=1mm, top=1mm, bottom=1mm,
  fonttitle=\bfseries,
  before skip=5pt, after skip=5pt
}

\begin{tcolorbox}[title=The high-level prompt $\mathcal{I}^{\text{H}}$]
\textbf{Description:} \{$desc$\} 
\ \textbf{\  User's emotion:} \{$emo$\} \\
\textbf{History:} \{$hist$\} \   
\textbf{\  Query:} \{$query$\} \\[2pt]
\textbf{Please select the best strategy:}
\begin{itemize}[
    leftmargin=15pt,  
    labelwidth=10pt,     
    labelsep=5pt,         
    itemindent=0pt,
    align=left,
    itemsep=1pt, topsep=2pt, parsep=0pt
]
  \item[(1)] \{$stra$ 1\} (2) \{$stra$ 2\} $\cdots$ (K) \{$stra$ K\}
\end{itemize}
\end{tcolorbox}
\noindent with the full version in Appendix \ref{appendix:prompt}. Note that we formulate $\mathcal{I}^{\text{H}}$ as a multiple-choice question (MCQ), instead of a plain question, forcing the LLM to choose one of the options without deliberate reasoning. Accordingly, the action set becomes the set of possible strategy indices $a \in \mathcal{A} := \{1, 2, \cdots, K \}$ where $K$ is the total number of strategies.

\paragraph{Inference.} Instead of decoding the next token, the finetuned LLM produces logits of available strategies, and the optimal strategy can be determined from the maximum logit
\begin{equation}
   a^{\text{H}} \leftarrow \arg \max \text{LLM}_{\phi}(\mathcal{I}^{\text{H}}(s) \oplus a), a \in \mathcal{A} \label{eq:high_level_Q_inference}
\end{equation}

\subsection{The Low-Level MDP} 

\paragraph{Training.} We implement the low-level actor $\pi^{\text{L}}$ and the low-level critic $V^{\text{L}}$ on another backbone LLM\footnote{For the critic, an extra value head is augmented.}, with the trainable parameters $\theta$. Similar to RLHF \cite{Ouyang2022ChatGPT}, the token-level PPO is employed to jointly train $\pi^{\text{L}}$ and $V^{\text{L}}$, with objectives shown in Eq. \ref{eq:rlhf_actor} and Eq. \ref{eq:rlhf_critic}.


 

\paragraph{Inference and prompt.} The low-level action $a^{\text{L}}$ is guided by the high-level action $a^{\text{H}}$ (\textit{i.e.}, $stra$) based on in-context learning (ICL):
\begin{align}
    a^{\text{L}} \leftarrow \text{LLM}_{\theta}(\mathcal{I}^{\text{L}}(s^{\text{L}}, a^{\text{H}})) := \pi^{\text{L}}_{a^{\text{H}}}(s^{\text{L}}) \label{eq:high_guides_low_actions}
\end{align}

\noindent Below is a short version of the low-level instruction template $\mathcal{I}^{\text{L}}$ (the full version is in Appendix \ref{appendix:prompt}):


\tcbset{
  colframe=black!75!white,
  colback=gray!5!white,
  boxrule=0.5pt,
  arc=2mm,
  left=1mm, right=1mm, top=1mm, bottom=1mm,
  fonttitle=\bfseries,
  before skip=5pt, after skip=5pt
}

\begin{tcolorbox}[title=The low-level prompt $\mathcal{I}^{\text{L}}$]
\textbf{Description:} \{$desc$\} \textbf{\  User's emotion:} \{$emo$\} \\
\textbf{History:} \{$history$\} \textbf{\  Query:} \{$query$\} \\  
\textbf{Respond based on a reply strategy of \{$stra$\}: }

\end{tcolorbox}

\subsection{The Reward Mechanisms} 

To provide multi-timescale feedback, we define our high-level reward ($r^{\text{H}}$) as the satisfaction score ($r^{\text{sat}}$) at the real time $t$, solely coming from the environment; while the low-level reward ($r^{\text{L}}$) has another two token-level ($k$) components
\begin{align}
    r^{\text{H}}_t = r^{\text{sat}}_t, \quad r^{\text{L}}_k = r^{\text{sat}}_t + \beta_1 r^{\text{KL}}_k + \beta_2 r^{\text{isc}}_k \label{eq:reward_low}
\end{align}
with definitions of reward components below.

\paragraph{The satisfaction score ($r^{\text{sat}}$).} Choice of rewards may be crucial, especially when an offline dataset constrains the sampling. In this paper, we leverage the LLM-as-the-Judge approach to automatically generate online scores. A strong-basis LLM, such as GPT-4o, is employed as the oracle (or the teacher) to produce the satisfaction score $r^{\text{sat}}$ from 0 to 5. In Appendix \ref{appendix:prompt}, we provide the detailed evaluation prompt; in Appendix \ref{appendix:human_gpt4o_consistency}, we conduct statistical experiments to illustrate the consistency between oracle and human scores.




\paragraph{The KL reward ($r^{\text{KL}}$).} The penalty of Kullback–Leibler (K-L) divergence prevents $\pi^{\text{L}}$ from deviating from the reference policy too much, which has the form of  $r^{\text{KL}}=-\KL(\pi^{\text{L}} \Vert \pi^{\text{ref}})$. 

\paragraph{Intrinsic Self-Consistency ($r^{\text{isc}}$).} Here we define the reward of intrinsic self-consistency as the inner-agent alignment between the low-level context (state plus strategy) and response:
\begin{align}
    r^{\text{isc}} = \log {\text{LLM}_{\theta}(s^{\text{L}} \oplus a^{\text{H}} \oplus a^{\text{L}})} \label{eq:reward_im}
\end{align}
where the right-hand side can also be observed as the cross-entropy loss of \text{LLM} with all textual inputs, similar to the pretrained loss in \cite{Ouyang2022ChatGPT}. By augmenting $r^{\text{isc}}$, we encourage the low-level policy to be more proficient with the strategy-steering response style, in contrast with the policy update driven solely by human feedback.

\subsection{The Training Algorithm}

Algorithm~\ref{alg:straChat_hrl} summarizes our training procedure. For each session, we initialize the hierarchical states from a sampled query and perform multi-turn roll-outs. At each turn, the high-level policy selects a strategy, then the low-level policy generates the response token by token and collects the corresponding satisfaction reward. The resulting transitions are stored in separate high- and low-level buffers. We then update $Q^\text{H}$ from the high-level buffer using DQN, while computing the token-level KL and intrinsic self-consistency rewards and jointly optimizing $\pi^\text{L}$ and $V^\text{L}$ with PPO.




\begin{algorithm}[tb]
\caption{HRL-{\ModelName}}
\label{alg:straChat_hrl}
\textbf{Input}: $Q^{\text{H}}$, $V^{\text{L}}$, $\pi^{\text{L}}$ from pretrained checkpoints; batch sizes $B^{\text{H}}$ and $B^{\text{L}}$
\begin{algorithmic}[1] 
\STATE Initialize buffers $\mathcal{B}^{\text{H}}, \mathcal{B}^{\text{L}} = \{ \}$ .
\WHILE{not converges}
    \STATE Initialize $s^{\text{H}}, s^{\text{L}}$ with  queries from dataset
    \STATE \textbackslash \textbackslash Roll-out
    \FOR{t = $1, \dots, T$}
        \STATE Inference $a^{\text{H}}_t$ by  $\arg \max$ (Eq. \ref{eq:high_level_Q_inference})
        \FOR{k = $1, \dots, K$}
            \STATE Inference $a^{\text{L}}_t \sim \pi^{\text{L}}(s^{\text{L}}_t)$ (Eq. \ref{eq:high_guides_low_actions})
            \STATE Increment $s^{\text{L}}_t$ to $s^{\text{L}}_{t+1}$
            \STATE break if {$a^{\text{L}}_t = <EOS>$}

        \ENDFOR
        \STATE collect $r_t^{\text{sat}}$ from the oracle LLM
        \STATE Append buffer $\mathcal{B}^{\text{L}} \cup \{(s^{\text{L}}_k,a^{\text{L}}_k,r_t^{\text{sat}})\}_{k=1}^{K}$ 
        \STATE Append buffer $ \mathcal{B}^{\text{H}} \cup (s^{\text{H}}_t, a^{\text{H}}_t, r_t^{\text{sat}})$

    \ENDFOR

    \STATE \textbackslash \textbackslash High-level Training
    \STATE Sample $(s^{\text{H}}_t,a^{\text{H}}_t,r_t^{\text{sat}},s^{\text{H}}_{t+1},a^{\text{H}}_{t+1})$ from $\mathcal{B}^{\text{H}}$
    \STATE update $Q^{\text{H}}$ by DQN (Eq. \ref{eq:dqn})

    \STATE \textbackslash \textbackslash Low-level Training
    \FOR{k = $1, \dots, K-1$}
         \STATE Get $(s^{\text{L}}_k,a^{\text{L}}_k,r_t^{\text{sat}},s^{\text{L}}_{k+1},a^{\text{L}}_{k+1})$ from $\mathcal{B}^{\text{L}}$
         \STATE Calculate $r_k^{\text{KL}}$ , $r_k^{\text{isc}}$, $r_k^{\text{L}}$ by Eq. \ref{eq:reward_low}
        \STATE Update $\pi^{\text{L}}$ and $V^{\text{L}}$ jointly by PPO (Eq. \ref{eq:rlhf_actor})
    \ENDFOR
\ENDWHILE
\end{algorithmic}
\end{algorithm}

\subsection{Theoretical Derivation}

We start from two assumptions proposed by the \textit{options} framework \cite{10.1016/S0004-3702(99)00052-1}, which are the foundation of the subsequent main theorem.

\begin{assumption}
\label{assumption:everywhere}
The strategies, as special types of \textit{options}, are everywhere:
\begin{align*}
    \forall s \in \mathcal{S}, \forall a^{\text{H}} \in \mathcal{A^{\text{H}}}: \exists s \in \mathcal{I}(s|a^{\text{H}})
\end{align*}
where $\mathcal{I}$ is the initial state set.
\end{assumption}

\begin{assumption}
\label{assumption:markovian}
The strategies are Markovian:
\begin{equation*}
    \text{P}(s_t^{\text{H}}, a_t^{\text{H}} | s_{0:t-1}^{\text{H}}, a_{0:t-1}^{\text{H}}) = \text{P}(s_t^{\text{H}}, a_t^{\text{H}} | s_{t-1}^{\text{H}}, a_{t-1}^{\text{H}})
\end{equation*}
\end{assumption}

Considering the features of the dialogue system, it is obvious that both assumptions are reasonable: the agent can choose any strategy or any response, no matter what the current state is; the prior history and query strategy are also prerequisites for determining the next strategy. Within a hierarchical framework, we propose the main theorem:

\begin{theorem} 
\label{theorem:convergence_strategy}
With Assumptions \ref{assumption:everywhere} and \ref{assumption:markovian} holding, the value function (Eq. \ref{eq:high_level_Q}) converges as long as the high-level policy (Eq. \ref{eq:high_level_Q_inference}) is deterministic, and the high- and low-level critics are solved interleaved.
\end{theorem}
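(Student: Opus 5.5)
The plan is to reduce the statement to the classical convergence theory for options in semi-Markov decision processes (SMDPs), following \citet{10.1016/S0004-3702(99)00052-1}. First I would use Assumptions \ref{assumption:everywhere} and \ref{assumption:markovian} to show that, once the low-level policy $\pi^{\text{L}}$ is held fixed, each strategy $a^{\text{H}}$ together with the induced token-level policy $\pi^{\text{L}}_{a^{\text{H}}}$ (Eq. \ref{eq:high_guides_low_actions}) is a Markov option. Its initiation set is all of $\mathcal{S}^{\text{H}}$ by Assumption \ref{assumption:everywhere}, its internal policy is $\pi^{\text{L}}_{a^{\text{H}}}$, and its termination condition is the emission of the EOS token. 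Assumption \ref{assumption:markovian} then guarantees that the high-level process over turns $t$ is an SMDP: transitions and the satisfaction reward $r^{\text{sat}}_t$ depend only on $(s^{\text{H}}_t,a^{\text{H}}_t)$.

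Next I would write the high-level Bellman optimality operator
\[
(\mathcal{T}^{\text{H}}Q)(s,a)=\mathbb{E}\big[r^{\text{sat}}+\gamma\max_{a'}Q(s',a')\big],
\]
where the expectation is over the option-induced transition. Because the response ends in finite time, the effective discount per turn is at most $\gamma<1$ and $r^{\text{sat}}\in[0,5]$, so $\mathcal{T}^{\text{H}}$ is a $\gamma$-contraction in the sup norm. The greedy high-level policy (Eq. \ref{eq:high_level_Q_inference}) is deterministic, so its fixed point is a well-defined $Q^{\text{H}*}$. The DQN update (Eq. \ref{eq:dqn}) on the finite strategy set $\mathcal{A}^{\text{H}}=\{1,\dots,K\}$ is then a stochastic approximation of $\mathcal{T}^{\text{H}}$. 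Under the standard Robbins--Monro step sizes and sufficient visitation, which Assumption \ref{assumption:everywhere} supports since every strategy is available in every state, SMDP Q-learning converges to $Q^{\text{H}*}$ for the fixed $\pi^{\text{L}}$.

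The third step handles the interleaving. For a fixed deterministic high-level policy, the low-level MDP $\mathcal{M}^{\text{L}}$ is a standard token-level MDP with bounded reward (Eq. \ref{eq:reward_low}), and PPO's critic $V^{\text{L}}$ converges under the usual policy-evaluation conditions. I would then treat the alternation as a two-timescale scheme in the style of Borkar, with one level updated on a slower step-size schedule so that it sees the other as quasi-stationary. The determinism of the high-level policy is what keeps the low-level environment stationary between switches of the greedy action: only finitely many deterministic policies exist over $K$ strategies, and once the argmax stabilizes, the conditioning $a^{\text{H}}$ seen by $\pi^{\text{L}}$ is fixed.

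The main obstacle is this coupling. The high-level SMDP's transition kernel depends on $\pi^{\text{L}}$, which itself changes, so the contraction argument only applies level by level. My first attempt would be to show that the high-level fixed point $Q^{\text{H}*}(\pi^{\text{L}})$ is Lipschitz in $\pi^{\text{L}}$ via a simulation-lemma bound on the induced kernel. Combined with the KL penalty $r^{\text{KL}}$, which bounds the per-iteration drift of $\pi^{\text{L}}$, this would let the two-timescale ODE argument yield convergence of $Q^{\text{H}}_\phi$. Function-approximation issues of DQN would be set aside by assuming a tabular or realizable setting.
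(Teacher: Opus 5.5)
Your first two steps are, in substance, the paper's entire proof. The paper recasts the high level as an SMDP and invokes Parr's SMDP Q-learning convergence lemma. It checks only that lemma's four hypotheses:
finite state and action spaces (bounded-length text, $K$ strategies);
Robbins--Monro step sizes (declared ``part of our basic assumption'');
finite reward variance ($r^{\text{sat}}\in\{0,\dots,5\}$);
and $0<\max_{s,a}\Gamma<1$.
Neither the determinism of the greedy policy nor the interleaving is used anywhere. $\pi^{\text{L}}$ is implicitly frozen into the environment, which is exactly your fixed-$\pi^{\text{L}}$ conclusion.

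Three corrections within that part. First, Assumption 1 only says every strategy is \emph{available} in every state; that does not give infinite visitation of every $(s,a)$. Algorithm 1 rolls out with the greedy $\arg\max$ and has no exploration, so visitation must be assumed outright, as the paper does through condition 2. Second, $Q^{\text{H}*}$ is well defined by the contraction property, not by determinism of the greedy policy. Third, only the tabular case is covered: realizability alone does not make off-policy Q-learning with function approximation converge.

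Your third step has no counterpart in the paper, and it does not close as written. A Borkar-type two-timescale argument needs two things:
(i) the fast iterate, for each frozen slow iterate, converges to a globally stable equilibrium that depends Lipschitz-continuously on the slow iterate;
(ii) the slow iterate's limiting ODE converges.

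Whichever level you make slow, this requires a convergence property of the PPO actor that nothing in the setting supplies. You would need either a unique stable fixed point for each frozen high-level policy, or a convergent slow PPO ODE. Moreover, the low-level problem sees $Q^{\text{H}}$ only through the discontinuous $\arg\max$, so the continuity requirement fails. Your ``once the argmax stabilizes'' presupposes the very convergence of $Q^{\text{H}}$ you are trying to prove.

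Even granting that $\pi^{\text{L}}$ moves on the slower timescale, the simulation-lemma Lipschitz bound yields only tracking, $\|Q^{\text{H}}_n-Q^{\text{H}*}(\pi^{\text{L}}_n)\|_\infty\to 0$. So convergence of $Q^{\text{H}}_n$ is equivalent to convergence of $Q^{\text{H}*}(\pi^{\text{L}}_n)$. That requires the induced kernels and rewards, hence essentially $\pi^{\text{L}}_n$, to converge.

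The KL term does not provide this. $r^{\text{KL}}=-\KL(\pi^{\text{L}}\Vert\pi^{\text{ref}})$ anchors to the fixed reference policy, not to the previous iterate. Even the per-step drift bound that PPO clipping does give implies nothing about convergence unless the drifts are summable. Also, $\mathcal{M}^{\text{L}}$ is not a fixed bounded-reward MDP: $r^{\text{KL}}$ and $r^{\text{isc}}=\log\text{LLM}_\theta(\cdot)$ both move with $\theta$ and are not uniformly bounded.

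The missing ingredient is an explicit hypothesis that forces $\pi^{\text{L}}_n\to\pi^{\text{L}}_\infty$, such as summable low-level step sizes or freezing $\pi^{\text{L}}$ after finitely many phases. With it, asynchronous stochastic approximation under a vanishing perturbation of the contraction gives $Q^{\text{H}}_n\to Q^{\text{H}*}(\pi^{\text{L}}_\infty)$. Without it, what you and the paper actually establish is convergence of the high-level critic for a fixed low-level policy.
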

Due to page limits, we leave the detail proof in Appendix \ref{appendix:theory}. Theorem \ref{theorem:convergence_strategy} suggests that our determined strategy can converge to the ground truth.

\section{Experiment}

In this section, we investigate two research questions (RQs): \textbf{RQ1}, \textit{whether our framework can adapt across different conversational domains}; and \textbf{RQ2}, \textit{whether our method can generalize to settings without strategy annotations}. For \textbf{RQ1}, we evaluate our framework on two different domains in Sections \ref{sec:dailylife_result} and \ref{sec:esc_result}, respectively. For \textbf{RQ2}, we further distinguish between in-domain (ID) and out-of-domain (OOD) evaluations according to the availability of strategy annotations.



\subsection{Implementation} We use LLaMA3.2-1B-Instruct and LLaMA3.1-8B-Instruct \cite{llama3modelcard} as high-level and low-level backbones, respectively. The framework is trained on OpenRLHF \cite{hu2024openrlhf} using 16 A100 GPUs. We use learning rates of 5e-6, 9e-7, and 9e-4 for training of $Q^{\text{H}}$, $\pi^{\text{L}}$, and $V^{\text{L}}$, respectively. The batch size to 64 and the max decoding length is set to 128.  We use GPT-4o as the reward evaluator to provide satisfaction scores.





\subsection{Datasets} 


\paragraph{Daily-life conversations.} DailyDialog \cite{li-etal-2017-dailydialog} is a manually constructed multi-turn dialogue dataset containing approximately 13,000 conversations on everyday topics. Each utterance is annotated with a conversational strategy\footnote{Denoted as `act' in the original dataset.}, including \textit{Question}, \textit{Inform}, \textit{Directive}, and \textit{Commissive}.

\paragraph{Emotional support conversations (ESC).} We consider two widely used datasets in this domain: ESConv \cite{liu2021ESconv}, which is annotated with $8$ supporting strategies, and EmpatheticDialogues \cite{rashkin-etal-2019-towards}, which contains \textbf{no} strategy annotations. Accordingly, we use ESConv for the ID setting, where the model is trained on its training set and evaluated on its test set. We then use EmpatheticDialogues for the OOD setting to evaluate the model trained on ESConv.

The ID evaluations on DailyDialog and ESConv address \textbf{RQ1} by covering two distinct conversational domains. In contrast, the OOD evaluation addresses \textbf{RQ2} by examining the transferability of {\ModelName} to a different dialogue dataset without strategy annotations. Appendix \ref{appendix:dataset} provides more detailed descriptions and statistics of these datasets.









\subsection{Evaluation}

\paragraph{Classification metrics.} We employ classification metrics of accuracy (Acc) and Macro-F1 (MaF1). We also refer to the evaluation methods proposed by \citet{kang-etal-2024-large}, which propose the $bias$ of strategies based on the Bradley-Terry model~\citep{bradley1952btmodel}. A smaller $bias$ indicates more balanced strategic determination.


\paragraph{Automatic metrics on response.} For generative tasks, we utilize similarity-based metrics like Bleu-2 (B-2) and Rouge-L (R-L); as well as Distinct-2 (D-2), which indicates response diversity. 


\paragraph{Human scoring.} Similar with \citet{kang-etal-2024-large}, we annotate with dimensions of \textit{Acceptance}, \textit{Effectiveness}, \textit{Sensitivity}, \textit{Fluency}, and \textit{Emotion}, and the ultimate purpose, seeker's \textit{Satisfaction}. The detailed annotation settings and criteria are provided in Appendix \ref{appendix:huam_score_principle}.

\subsection{Baselines}


We consider several categories of baselines:

\paragraph{Training-free methods.} These methods induce strategic or structured reasoning without updating the backbone LLM parameters, including \textbf{Self-Refine} \cite{Madaan2023SelfRefine}, Emotional Chain of Thoughts (\textbf{ECoT}) \cite{li2024enhancingemotionalgenerationcapability}, Skeleton of Thoughts (\textbf{SoT}) \cite{ning2024skeletonofthought}, Tree of Thoughts (\textbf{ToT}) \cite{yao2023tree}, Plan-and-Solve (\textbf{PS}) \cite{wang2023planandsolvepromptingimprovingzeroshot}, Finite State Machine (\textbf{FSM}) \cite{wangFSMFiniteState2024}, as well as \textbf{Direct-Refine}, a single-turn variant of Self-Refine, and \textbf{Direct}, which directly inference with the backbone.

\paragraph{Supervised learning methods.} These methods directly leverage the `golden' responses to finetune the LLM, including supervised finetuning (\textbf{SFT}), as well as the supervised variants of ECoT (\textbf{SFT + ECoT}) and FSM (\textbf{EmoFSM})\footnote{For both methods, we first inference the model on the training set according to the corresponding prompting mechanism, then finetune the LLM with the inferenced response.} \cite{10.1007/978-981-92-1926-1_10}. 

\paragraph{Reinforcement learning methods.} We compare against different RL paradigms to evaluate the effectiveness of {\ModelName}; these baselines also provide natural references for ablating different levels of our hierarchical framework:

\noindent (1) \textbf{PPO}, a representative \textit{token-level policy optimization} method widely adopted in RLHF \citep{Ouyang2022ChatGPT}.

\noindent (2) \textit{Utterance-level planners}, including \textbf{straQ*} \citep{wang-etal-2025-convert}, which trains an LLM with DQN to produce semantic strategy-level decisions, and \textbf{DAT} \cite{liDialogueActionTokens2024}, which represents each dialogue utterance as a continuous vectorized action and optimizes multi-turn decision-making. Both methods learn lightweight high-level planners that subsequently steer LLM generation. 

\noindent (3) \textbf{ArCHer} \cite{pmlr-v235-zhou24t}, an HRL-based method that combines an \textit{utterance-level critic} with a \textit{token-level actor}, providing a hierarchical baseline with both temporal levels.

Detailed introductions and implementations of the baselines can be found in Appendix \ref{appendix:baseline}.

\subsection{Results on Daily-Life Conversations}
\label{sec:dailylife_result}



\paragraph{Return optimization.} We first examine the reward and return estimates during training. We run the experiment on DailyDialog five times with different random seeds and report the mean and standard deviation of $Q^H$ and $r^L$ in Table~\ref{tab:reward_and_value}. {\ModelName} achieves higher average rewards and returns than baselines, with $p$-values below 0.01, indicating effective return optimization. Specifically, the significance test $H_0: \text{Metric}_X \geq \text{Metric}_{\ModelName}$ yields a $p$-value below 0.01 for both baselines $X$ and both metrics. We also observe stable convergence, with several training curves provided in Appendix \ref{appendix:train_curve}.




\begin{table}[t!]
\centering
\small
\begin{tabular}{ccc}
\toprule
\textbf{Method} & $<Q^{\text{H}}>$  & $<r^{\text{L}}>$  \\
\hline
Direct  & $486.2 \pm 1.22$ & $3.21 \pm 0.32$ \\
Raw Dataset   & $551.4 \pm 1.33$  & $3.53 \pm 0.35$  \\
\textbf{\ModelName} (ours) & $\textbf{616.5} \pm 1.35$ & $\textbf{4.03} \pm 0.34$ \\
\bottomrule
\end{tabular}
\caption{Averaged values and rewards on DailyDialog.}
\label{tab:reward_and_value}
\end{table}

\paragraph{Strategy determination.} As shown in Table \ref{tab:ID_result_DailyDialog}, \ModelName{} achieves the highest Acc despite receiving no reward signal from the ground-truth strategy labels during training. This suggests that the learned high-level policy aligns well with the human-annotated strategy space under the automatic reward signal. Although \ModelName{} does not achieve the best $bias$ score, its competitive performance on this metric, together with the highest Acc, results in the best MaF1 of 58.91.


\paragraph{Response quality.} Table \ref{tab:ID_result_DailyDialog} further shows that \ModelName{} achieves the highest similarity to the ground-truth responses, as measured by B-2 and R-L, while also attaining the best diversity in terms of D-2. It consistently outperforms prompting, finetuning and RL-based baselines on DailyDialog. Previously reported results using different backbones are provided  in Appendix \ref{appendix:result_w_diff_backbone}.


\begin{table}[t!]
\renewcommand{\arraystretch}{1.11}
\centering
\resizebox{\columnwidth}{!}{
\begin{tabular}{l cccccc}
    \toprule
    Methods & Acc $\uparrow$ & MaF1 $\uparrow$ & $bias$ $\downarrow$ & B-2 $\uparrow$ & R-L $\uparrow$ & D-2 $\uparrow$ \\
    \midrule
    Direct & 52.60 & 18.03 & 1.66 & 3.35 & 10.33 & 44.74 \\ 
    Direct-Refine & 48.27 &	28.28 &	0.70 &	2.56 &	8.70	& 43.67 \\
    Self-Refine \cite{Madaan2023SelfRefine} & 49.76 &	22.15 &	1.18 &	2.40	& 7.75 & 34.01 \\
    ECoT \cite{li2024enhancingemotionalgenerationcapability} & 38.94 &	29.99 &	0.27 &	1.78 &	6.00 &	55.26 \\
    SoT \cite{ning2024skeletonofthought} & N/A & N/A & N/A & 2.53 & 7.97 & 59.98  \\
    ToT  \cite{yao2023tree} & N/A & N/A & N/A & 2.52 & 8.84 &	43.19  \\
    PS \cite{wang2023planandsolvepromptingimprovingzeroshot} & N/A & N/A & N/A & 2.60 & 7.76 & 39.73  \\
    FSM \cite{wangFSMFiniteState2024} & 46.86 & 21.22 &	1.30	& 2.70 &	9.44 & 38.75 \\
    \midrule
    SFT & 60.19 & 44.82 &	0.82 &6.81 & 18.52 & 43.36 \\
    SFT+ECoT &60.11 & 44.9 &	0.66 &	6.61 &	18.07 &	42.87 \\
    EmoFSM \cite{10.1007/978-981-92-1926-1_10} & 60.03 & 46.02 & 0.55 & 5.85 & 21.77 & 47.43 \\
    \midrule
    PPO & N/A & N/A & N/A & 7.85 & 25.16 & 50.59 \\
    DAT \cite{liDialogueActionTokens2024}  & N/A & N/A & N/A & 3.45 & 11.80 & 0.90 \\
    straQ* \citep{wang-etal-2025-convert} & 54.01 & 50.10 & 0.62 & 4.18 & 13.09 & 59.27 \\
    ArCher \cite{pmlr-v235-zhou24t} & 50.41 & 42.67 & \textbf{0.21} & 5.17 & 14.35 & 55.16 \\
    
    \textbf{{\ModelName}} (ours) & \textbf{63.64} & \textbf{58.91} & 0.63 & \textbf{16.35} & \textbf{35.22} & \textbf{62.67} \\ 
    \bottomrule
\end{tabular}}
\caption{ID results of automatic metrics including Acc, MaF1, $bias$, B-2, R-L and D-2 on DailyDialog. The best result of the methods is \textbf{bolded}. 
}
\label{tab:ID_result_DailyDialog}
\end{table}


\subsection{Results on ESC}
\label{sec:esc_result}


\paragraph{In-domain results.} As shown in Table \ref{tab:ID_result_ESconv}, when moving from daily-life dialogue to emotional support conversations, {\ModelName} achieves the best or second-best performance across all metrics, indicating that it can select appropriate support strategies while maintaining strong response quality. We further report human evaluation results in Appendix \ref{appendix:human_eval_result}. Typical cases are provided in Appendix \ref{appendix:case}.



\paragraph{Transfer to OOD.} In this setting, we exclude finetuning baselines to better reflect the knowledge-transfer scenario. As shown in Table~\ref{tab:OOD_result_empatheticdialogues}, \ModelName{} achieves the best overall performance on EmpatheticDialogues in this zero-shot setting, demonstrating strong out-of-domain generalization. This result suggests that the empathy-related behaviors learned from ESConv can be effectively transferred to a dataset without strategy annotations.


\begin{table}[t!]
\renewcommand{\arraystretch}{1.11}
\centering
\resizebox{0.98\columnwidth}{!}{
\begin{tabular}{l cccccc}
    \toprule
    Methods & Acc $\uparrow$ & MaF1 $\uparrow$ & $bias$ $\downarrow$ & B-2 $\uparrow$ & R-L $\uparrow$ & D-2 $\uparrow$ \\
    \midrule
    Direct & 11.80 &	10.26 &	1.61 &	3.47 & 10.64 & 33.45 \\ 
    Direct-Refine & 17.08 &	11.07 &	1.27 &	3.10 &	6.13 &	14.22 \\
    Self-Refine \cite{Madaan2023SelfRefine} & 17.58 &	13.61 &	1.92 &	3.34 &	9.71 &	14.61 \\
    ECoT \cite{li2024enhancingemotionalgenerationcapability} & 15.32 &	10.38 &	1.69 &	3.16 &	10.50 &	30.38 \\
    SoT \cite{ning2024skeletonofthought} & N/A & N/A & N/A &	3.07 &	8.76 &	26.15 \\
    ToT  \cite{yao2023tree} & N/A & N/A & N/A & 2.65 &	9.81 &	14.2  \\
    PS \cite{wang2023planandsolvepromptingimprovingzeroshot} & N/A & N/A & N/A & 2.81 &	8.27 &	19.12  \\
    FSM \cite{wangFSMFiniteState2024} & 17.37 &	11.15 &	0.81 &	4.12 &	11.83 &	35.43 \\
    \midrule
    SFT & 32.43 & 21.29 & 1.28 & \bf 6.97 & \bf 16.59 & 50.45 \\
    SFT+ECoT & 30.80 &	17.70 &	1.35 &	6.51 &	15.00 &	34.96 \\
    EmoFSM \cite{10.1007/978-981-92-1926-1_10} & 28.00 &	23.70 &	\textbf{0.41} &	5.88 &	15.30 &	51.48 \\
    \midrule
    PPO & N/A & N/A & N/A & 6.76 & 15.45 & 50.93 \\ 
    DAT \cite{liDialogueActionTokens2024} & N/A & N/A & N/A & 3.26 & 11.24 & 35.82 \\ 
    straQ* \citep{wang-etal-2025-convert}  & \underline{37.69} & \underline{34.57} & 0.59 & 3.59 & 11.74 & 44.14 \\
    ArCher \cite{pmlr-v235-zhou24t} & 24.50 & 19.60 & 0.50 & 5.30 & 13.10 & \textbf{54.80} \\
    \textbf{{\ModelName}} (ours) & \textbf{39.26} & \textbf{36.85} & \underline{0.48} & \underline{6.93} & \underline{16.28} & \underline{52.42} \\ 
    \bottomrule
\end{tabular}}
\caption{ID results of automatic metrics including Acc, MaF1, $bias$, B-2, R-L and D-2 on ESConv. The best result is \textbf{bolded} and the second best is \underline{underlined}.
}
\label{tab:ID_result_ESconv}
\end{table}

\begin{table}[htbp!]
\centering
\small
\begin{tabular}{l cccc}
    \toprule
    \textbf{Methods} & B-2 $\uparrow$ & R-L $\uparrow$ & D-2 $\uparrow$ \\
    \midrule
    Direct & 3.09 & 9.91 & 25.23  \\
    ECoT\cite{li2024enhancingemotionalgenerationcapability} & 2.91 & 9.79 & 32.65 \\
    SoT \cite{ning2024skeletonofthought}  & 1.79 & 5.66 & \textbf{48.59} \\
    ToT  \cite{yao2023tree}  & 2.31 & 9.05 & 29.09 \\
    PS \cite{wang2023planandsolvepromptingimprovingzeroshot} & 2.69 & 6.93 & 24.02 \\
    FSM \cite{wangFSMFiniteState2024} & 3.33 & 10.80 & 33.37  \\
    \midrule
    PPO & \underline{3.91} & \underline{11.16} & 41.59 \\
    DAT \cite{liDialogueActionTokens2024} & 3.38 & 10.90 & 35.05 \\
    straQ* \citep{wang-etal-2025-convert} & 3.61 & 9.15 & 39.60 \\
    ArCHer \cite{pmlr-v235-zhou24t} & 3.84 & 10.75 & 45.19 \\
    \textbf{{\ModelName}} (ours) & \bf 4.49 & \bf 12.93  & \underline{46.53}  \\
    \bottomrule
\end{tabular}
\caption{OOD results of B-2, R-L and D-2 on EmpatheticDialogues. The best result of the methods is \textbf{bolded} and the second best is \underline{underlined}. 
}
\label{tab:OOD_result_empatheticdialogues}
\end{table}

\begin{table}[t!]
\centering
\small
\resizebox{0.98\columnwidth}{!}{
\begin{tabular}{l cccccc}
    \toprule
    \textbf{Method} & Acc $\uparrow$ & MaF1 $\uparrow$ & $bias$ $\downarrow$ & B-2 $\uparrow$ & R-L $\uparrow$ & D-2 $\uparrow$ \\
    \midrule
    w/o high-level & N/A & N/A & N/A & 7.85 & 25.16 & 50.59  \\
    w/o low-level & 53.39 & 49.02 & \textbf{0.53} & 4.17 & 13.15 & 59.44  \\
    w/o isc & 60.04 & 53.01 & 0.68 & 16.21 & 35.06 & 60.95 \\
  \textbf{{\ModelName}} (ours)  & \bf 63.64 & \bf 58.91 & 0.63 & \bf 16.35 & \bf 35.22 & \textbf{62.67} \\
    \bottomrule
\end{tabular}}
\caption{Ablation study of {\ModelName} on DailyDialog. 
}
\label{tab:ablation}
\end{table}


\paragraph{Ablation Study.} We consider three ablations: (1) \textit{w/o high-level}, which removes the high-level Q-network and reduces {\ModelName} to token-level PPO; (2) \textit{w/o low-level}, which removes low-level policy optimization and reduces the framework to a DQN-based strategy planner (\textit{i.e.}, straQ* \cite{wang-etal-2025-convert}) guiding a frozen LLM; and (3) \textit{w/o isc}, which removes intrinsic self-consistency from the low-level reward. The first two variants are identical to the PPO and straQ* baselines reported in the main results, respectively. We include them there because they are strong and representative comparison methods, while they also serve as natural ablations of our hierarchical architecture. As shown in Table~\ref{tab:ablation}, removing either level leads to a substantial performance drop, supporting the effectiveness of the hierarchical design. Removing intrinsic self-consistency also yields a consistent decline in metrics, further validating its contribution. Appendix~\ref{appendix:dataset} presents additional sensitivity analyses on $\gamma$ and the reward weights ($\beta_1$ and $\beta_2$).


\subsection{Discussion}

\paragraph{Method variance across domains.} Tables \ref{tab:ID_result_DailyDialog} and \ref{tab:ID_result_ESconv} reveal substantial performance variation across domains for several baselines. For example, SFT and ArCher perform comparably to, or even outperform, {\ModelName} on ESConv, but perform substantially worse on DailyDialog. One possible explanation is that ESConv focuses on emotional support with relatively limited topics and clear conversational objectives, leading to more structured behavior patterns (Exploration->Comforting->Action, as in \cite{liu2021ESconv}) that are easier for supervised methods to imitate. In contrast, DailyDialog contains more open-ended topics, spontaneous emotional exchanges, and broader behavioral distributions, making pure imitation more challenging. Despite these cross-domain variations, {\ModelName} maintains robust performance across all three datasets, indicating strong generalization and adaptation.

\paragraph{Intrinsic self-consistency for strategic chatting.} Our framework uses high-level strategies to guide the low-level policy through in-context conditioning. However, intensive post-training for return optimization may weaken LLM's adherence to these strategy instructions. We therefore introduce intrinsic self-consistency as an auxiliary reward to encourage strategy-aligned behavior during multi-turn training. Its effectiveness is supported by the ablation results in Table \ref{tab:ablation}. A representative case in Table~\ref{tab:im} further shows that without intrinsic self-consistency, the model fails to follow the high-level strategy (\textit{Question}). Additional examples are in Appendix~\ref{appendix:result}.



\begin{table}[h!]
    \centering
    \footnotesize
    \resizebox{0.98\columnwidth}{!}{
    \begin{tabular}{p{1.2cm}|p{6.7cm}}
        \toprule
        \textbf{User} & It's a portable TV. It's a popular thing now. \\
        \midrule
        \textbf{Assistant} & Oh, that's new to me. So what's on everyday? \\
        \midrule
        \textbf{User} & News about current affairs, documentaries, music, movies, noncommercial ads and so on. \\
        \midrule
        \textbf{w/o isc} & (\textit{Question}) \textcolor{red}{I bet there’s something fun to watch on it.} \\
        \midrule
        \textbf{w/ isc} & (\textit{Question}) Got anything exciting on there? \\
        \bottomrule
    \end{tabular}}
    \caption{Comparison of responses with and without intrinsic self-consistency (isc). The response of w/o isc shows poor strategy-response consistency (in \textcolor{red}{red}).} 
    \label{tab:im}
\end{table}

\paragraph{Lookahead strategy determination.} To better understand the learned strategy dynamics, we further analyze strategy transitions, where each grid cell $(i, j)$ denotes the transition frequency from the strategy in the $i$-th row to that in the $j$-th column.

Figure~\ref{fig:strategy_matrix} (Left) shows normalized strategy transitions from the user to {\ModelName} within the same turn. \textit{“Question $\rightarrow$ Inform”} and \textit{“Directive $\rightarrow$ Commissive”} are the two most frequent transitions, consistent with common conversational patterns in which the agent responds to a question or commits to providing assistance.

Figure~\ref{fig:strategy_matrix} (Right) shows transitions between {\ModelName}'s strategies across adjacent turns. The upper-right triangle reveals a statistically prominent pattern, \textit{Inform $\rightarrow$ Question $\rightarrow$ Directive $\rightarrow$ Commissive}, which is also consistent with the Helping Skills Theory discussed in \citet{liu2021ESconv}. This observation suggests that {\ModelName} can anticipate subsequent conversational needs through higher-level strategy planning. 

\section{Related Work}

\subsection{Hierarchical Reinforcement Learning}

Hierarchical Reinforcement Learning (HRL) decomposes decision-making into hierarchical MDPs, with representative paradigms including \textit{goal-conditioned} frameworks \cite{Kulkarni2016h-DQN,Vezhnevets2017FuN,Levy2019HAC} and \textit{options}-based frameworks with discrete subskills \cite{10.5555/3298483.3298491,chunduru2020attention,NEURIPS2019_4f284803}. Our framework follows the latter paradigm, with explicit semantic connections between high- and low-level actions.

\subsection{Utterance-level RL for Dialogue Systems}


Utterance-level RL complements token-level RL \cite{Ouyang2022ChatGPT} by introducing coarser action abstractions to alleviate reward sparsity and reduce exploration complexity. For example, DAT \cite{li2024dialogue} models utterance-level actions in a continuous space, while ArCHer \cite{pmlr-v235-zhou24t} adopts a two-level HRL framework with a task-oriented high-level critic. Our method also employs a two-level HRL, but introduces explicit, high-level strategy abstraction to guide token-level generation, providing a semantic interface between high-level planning and token-level generation.




\begin{figure}[t]
    \centering
    \includegraphics[width=0.45\linewidth]{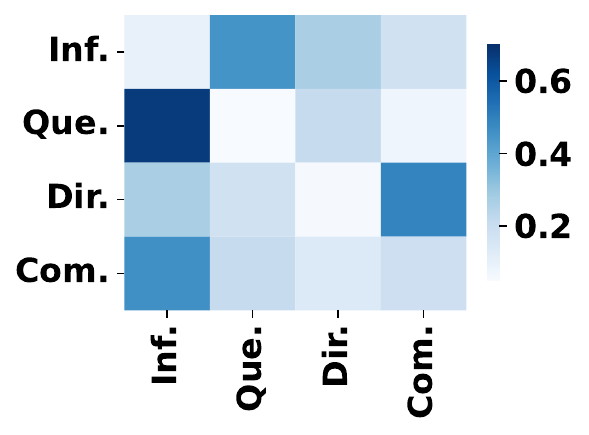}
    \hspace{0.1in}
    \includegraphics[width=0.45\linewidth]{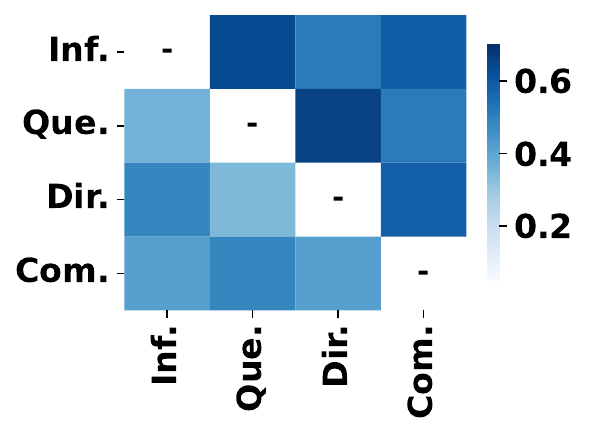}
    \caption{Occurrences of strategy (in abbrev.) transitions. \textit{Left:} Strategy transitions from user (row) to {\ModelName} (column). \textit{Right:} Strategies transitions of {\ModelName} between adjacent turns (row $\rightarrow$ column).} 
    \label{fig:strategy_matrix}
\end{figure}

\section{Conclusion}


In this paper, we propose {\ModelName}, an HRL-based dialogue agent with temporal and strategic abstractions. {\ModelName} employs a high-level Q-network to select strategy-level actions, which guide the low-level policy in generating detailed responses. To improve sampling efficiency and training stability, we optimize the high-level Q-network with DQN and the low-level actor-critic with PPO. To further strengthen the alignment between the two levels, we incorporate a KL penalty and an intrinsic self-consistency reward into the low-level optimization. Experiments on daily-life and emotional support conversations, together with the OOD evaluation, demonstrate that {\ModelName} effectively combines explicit strategic planning with token-level policy optimization across diverse conversational settings.

\clearpage
\newpage





\section*{Limitations}

Our goal is not to claim that the proposed method is universally optimal, but to examine whether explicit strategy abstraction benefits conversational RL in domains with well-defined strategies. {\ModelName} relies on a predefined strategy set, which constrains the learned policy and limits its applicability to domains with established or reliably annotated strategies. Moreover, the current strategy-then-token design selects a single strategy per turn, whereas real utterances may involve multiple strategies. Future work may explore compositional, dynamically generated, or automatically discovered strategies to improve flexibility and generalization.

\section*{Ethical Considerations}

Strategy-aware agents may introduce risks because high-level decisions can shape how models influence users. An improperly designed strategy space could encourage aggressive, manipulative, or otherwise undesirable behaviors. In {\ModelName}, the high-level policy is restricted to a predefined strategy set, making available behaviors explicit and easier to control or filter. Practical deployment should nevertheless carefully audit the strategy taxonomy, reward signals, and behavioral constraints.

\bibliography{main}

\newpage
\appendix

\section{Implementation Details}
\label{appendix:implementation}

\subsection{Detailed Prompts}
\label{appendix:prompt}

\paragraph{Prompt of the high-level agent.}

Below is the high-level instruction template. By querying the LLM in a multi-choice question style, LLM is encouraged to decode the next token as the index of one of the candidate strategies. User emotion, conversation description, and history are also included.

\begin{promptbox}{The high-level prompt}
You are given a multi-turn dialogue between a user and an assistant. The user's basic situation is as follows:\\
Emotion: \{$emo$\}\\
Description: \{$desc$\}\\

Below is the dialogue history between the user and the assistant:\\
\{$history$\}\\

The user's current query is:\\
\{$query$\}\\

Based on the above context, please select the most appropriate response strategy from the following options:\\
strategy \#(1) \{$a_1$\}\\
...\\
strategy \#(k) \{$a_k$\}\\
Please provide your selection in the format of (1) through (k). Your selection is:
\end{promptbox}

\paragraph{Prompt of the low-level agent.}

The low-level instruction template steers the LLM by the strategy from the high-level, and generates an enhanced response.

\begin{promptbox}{The low-level prompt}
You are given a multi-turn dialogue between a user and an assistant. The user's basic situation is as follows:\\
Emotion: \{$emo$\}\\
Description: \{$desc$\}\\

Below is the dialogue history between the user and the assistant:\\
\{$history$\}\\

The user's current query is:\\
\{$query$\}\\

The current response strategy is:\\
\{$stra$\}\\

Based on the current response strategy and other information, please act as an assistant and provide the best response. Keep replies brief without additional pronouns or extra elements.
\end{promptbox}


\paragraph{The evaluation prompt of GPT-4o.}

The model responses are evaluated by GPT-4o from the human perspective, across five dimensions: \textit{Acceptance}, \textit{Effectiveness}, \textit{Sensitivity}, \textit{Fluency}, and \textit{Emotion}. Each dimension is scored as either 0 or 1, indicating whether the response is strong in that aspect. The final output includes the total and detailed scores, as well as a brief explanation:

\begin{promptbox}{Evaluation prompt}
You are an expert evaluator simulating human assessment of dialogue responses. 
You are given a multi-turn dialogue between a user and an assistant. The user's basic situation is as follows:\\
    Emotion: \{$emo$\}\\
    Description: \{$desc$\}\\
Below is the dialogue history between the user and the assistant:\\
    \{$history$\}\\

The user's current query is:\\
    \{$query$\}\\

The assistant's response is:\\
    \{$response$\}\\

Please evaluate the assistant's response across five binary-scored dimensions. \\

\textit{1. Acceptance} -- Is the response socially appropriate and non-offensive? \\
\textit{2. Effectiveness} -- Does the response address the user's intent appropriately? \\
\textit{3. Sensitivity} -- Does the response consider emotional or situational context? \\
\textit{4. Fluency} -- Is the response grammatically correct and fluent? \\
\textit{5. Emotion} -- Does the response convey an appropriate emotional tone? \\

Each dimension should be scored as \textbf{0 (unsatisfactory)} or \textbf{1 (satisfactory)}.\\

\textbf{{Output format:}} \\
\textit{Acceptance}\textit{: [0/1]}, 
\textit{Effectiveness}\textit{: [0/1]}\\
\textit{Sensitivity}\textit{: [0/1]},
\textit{Fluency}\textit{: [0/1]}\\
\textit{Emotion}\textit{: [0/1]}, 
\textit{Total score}\textit{: [0--5]} \\
\textit{Explanation}\textit{: [your reasoning here]}
\end{promptbox}

\subsection{More Methodology Details}
\label{appendix:more_method}

\paragraph {Kullback-Leibler (KL) Divergence} is a fundamental concept in information theory and machine learning that measures the difference between two probability distributions. It is a crucial tool for comparing and approximating distributions, with far-reaching implications in various fields. The KL Divergence between two probability distributions $P$ and $Q$ is defined as: s

\begin{equation}
D_{KL}(P||Q) = \sum_{i=1}^N p(x) \log\frac{p(x)}{q(x)}
\end{equation}
where $p(x)$ and $q(x)$ are the probability density functions of $P$ and $Q$ respectively.

\section{Experiment Settings}
\label{appendix:experiment_setting}

\subsection{Datasets}
\label{appendix:dataset}

\paragraph{Detailed introductions.} We introduce our experimental datasets in more details:
\begin{itemize}
    \item The \textit{DailyDialog} dataset \cite{liDailyDialogManuallyLabelled2017} is  a widely used benchmark for daily-life conversation. It contains 13,118 multi-turn dialogues across diverse daily topics (e.g., family, work, hobbies) with manually annotated emotions and dialogue strategies (named `act' in the original dataset). As shown in Table~\ref{tab:strategies}, it is annotated with 7 emotions and 4 strategies: \textit{Inform(Inf.)} , \textit{Question(Que.)} , \textit{Directives(Dir.)}, and \textit{Commissive(Com.)}.
    \item The \textit{ESConv} dataset \cite{liu2021ESconv} is a specialized corpus for emotional support research. It contains 1,000+ multi-turn dialogues where users articulate personal struggles (e.g., workplace stress, interpersonal conflicts, self-esteem issues), and support providers respond with empathy, validation, and actionable guidance. Aligned with Table~\ref{tab:strategies}, it is annotated with 11 emotions and 8 conversational strategies. The strategies are structured across three stages (I–III).
    \item The \textit{EmpatheticDialogues} dataset \cite{rashkin-etal-2019-towards} contains conversations centered on recognizing and responding to emotions (e.g., joy, sadness, frustration). It has annotations of user emotions but no strategies.

\end{itemize}

\begin{table}[ht!]
\centering
\resizebox{0.95\columnwidth}{!}{ 
\begin{tabular}{c|l|c|c}  
\toprule
\textbf{Dataset} & \multicolumn{1}{c|}{\textbf{Strategies}} & \multicolumn{1}{c|}{\textbf{Abbr.}} & \multicolumn{1}{c}{\textbf{Stage}}  \\ 
\toprule
\multirow{4}{*}{\makecell[c]{DailyDialog}} & Inform & \bf Inf.& - \\ 
& Question &\bf Que. & -  \\ 
& Directive &\bf Dir. & - \\ 
& Commissive &\bf Com.& - \\ 
\midrule 
\multirow{8}{*}{\makecell[c]{ESConv}} & Question & \bf Que.& I \\ 
& Restatement or Paraphrasing &\bf Res.\& Par.& I \\ 
& Reflection of Feelings &\bf Ref.& II \\ 
& Self-disclosure &\bf Self-Dis.& II \\ 
& Affirmation and Reassurance &\bf Aff.\& Rea.& III \\
& Providing Suggestions &\bf Pro.& III \\ 
& Information & \bf Inf.& III \\ 
& Others &\bf Others& - \\ 
\bottomrule
\end{tabular}
}
\caption{Strategies, abbreviations, and stages within DailyDialog and ESConv.} 
\label{tab:strategies}
\end{table}


\paragraph{Statistics of datasets.} Table \ref{tab:statistics} summarizes the statistics of ESConv, DailyDialog, and EmpatheticDialogues. It presents key metrics like the number of sessions, utterances, average utterance length, and counts of strategies and emotions for both speakers in each dataset. For instance, DailyDialog has the most sessions (13.1k) and utterances (103.0k in total for both speakers), while ESConv shows detailed strategy annotations (11 for Speaker1) and a moderate number of sessions (1.3k). EmpatheticDialogues, while not annotated with strategies, offers a rich set of 32 emotion categories, making it valuable for studying emotional diversity and text-based evaluation.

\begin{table}[t!] 
  \centering
  \resizebox{\columnwidth}{!}{
    \begin{tabular}{llccc}
    \toprule
    \multicolumn{2}{c}{\textbf{Category}} &\textbf{ESconv} & \textbf{DailyDialog} & \textbf{Empathetic-}  \\
    \midrule
    \multicolumn{2}{l}{\# Sessions} & 1.3K & 13.1k& 2.5K \\
    \multicolumn{2}{l}{\# Utterances} & 38K & 103.0k& 11.0K\\
    \multicolumn{2}{l}{Average \# Utterances} & 28.9  & 7.9& 4.3\\
    \multicolumn{2}{l}{Average Utterance Length} & 18.8  & 13.6& 16.7\\
    \midrule
    \multirow{5}[0]{*}{Speaker1} & \# Utterances & 20K&53.8k& 5.7K \\
       & Avg \# Utterances & 15.4 & 4.1& 2.2 \\
       & Avg Uttr Len & 16.8& 13.2& 20.8  \\
        & \# Strategies & -& 4& -\\
       & \# Emotions & 11& 7& 32 \\
    \midrule
    \multirow{5}[0]{*}{Speaker2} & \# Utterances & 18K & 49.2k& 5.2K\\
       & Avg \# Utterances & 13.6& 3.9& 2.1  \\
       & Avg Uttr Len & 21.0& 14.1& 12.3  \\
       & \# Strategies & 8& 4& -\\
       & \# Emotions & -& 7& 32 \\
    \bottomrule
    \end{tabular}%
  }
  \caption{
  Statistics of ESConv, DailyDialog and EmpatheticDialogues (named Empathetic- in the table). 
  }
  \label{tab:statistics}%
\end{table}

In our experiment, we use the following data splitting strategy:
    \begin{itemize}
    \item \textit{DailyDialog}: Following its official default configuration, the dataset is divided into a training set (11118 conversations), a validation set (1000 conversations), and a test set (1000 conversations).
    \item \textit{ESConv}: We randomly split the dataset into training and test sets with a 9:1 ratio.
    \item \textit{EmpatheticDialogues}: Following its official default configuration, Conversations are partitioned into approximate splits of 80\% training (19533 conversations), 10\% validation (2770 conversations), and 10\% testing (2547 conversations). To avoid data leakage, all conversations sharing the same initial situational description (provided by a speaker) are grouped into the same partition.
\end{itemize}


\paragraph{Example of data samples.} Table \ref{tab:ex_ESConv} shows an example dialogue snippet from the ESConv dataset. It illustrates a conversation where the seeker expresses anxiety about quitting a disliked job without a secure alternative. The dialogue is annotated with the topic, the seeker’s query, the emotional state (anxiety with high intensity), and the empathetic strategy used by the supporter—in this case, a “reflection of feelings.” This example highlights how ESConv captures nuanced emotional expression alongside supportive conversational strategies.

\begin{table}[t!]
    \centering
    \small
    \resizebox{\columnwidth}{!}{
    \begin{tabular}{c|p{7.5cm}}
        \toprule
        \textit{{Topic}} & {I hate my job but I am scared to quit and seek a new career.} \\
        \midrule
        \textit{Query} & \textit{\{history\}} \newline \textit{seeker:} Seriously! What I'm scared of now is how to secure another job. \\
        \midrule
        \textit{{Emotion}} & {Anxiety} (intensity: 5) \\
        \midrule
        \textit{{Strategy}} & {Reflection of feelings} \\
        \midrule
        \textit{Response} & \textit{supporter:} I can feel your pain just by chatting with you. \\
        \bottomrule
    \end{tabular}
    }
    \caption{An example of ESConv.}
    \label{tab:ex_ESConv}
\end{table}

\subsection{Key Hyperparameters} 
\label{appendix:hyperparameter}

\begin{table}[t!]
\centering
\resizebox{0.9\columnwidth}{!}{ 
\begin{tabular}{llcc}
\toprule
\textbf{Group} & \textbf{Parameter} & \textbf{DailyDialog} & \textbf{ESConv} \\
\midrule
\multirow{2}{*}{SFT} 
    & lr & 5e-5 & 5e-5 \\
    & batch size & 64 & 64 \\
\midrule
\multirow{17}{*}{{\ModelName}} 
    & \multicolumn{3}{l}{\textbf{high-level:}} \\
    & lr of $Q^{\text{H}}$ & 5e-6 & 5e-6 \\
    & batch size ($B^{\text{H}}$) & 64 & 64 \\
    & replay buffer size & 20000 & 20000 \\
    & max window len & 1024 & 2048 \\
    & update freq of $\phi \rightarrow \bar{\phi}$ & 10 & 10 \\
    & discount factor $\gamma$ & 0.85 & 0.85 \\
\cline{2-4}
    & \multicolumn{3}{l}{\textbf{low-level:}} \\
    & lr of $\pi^{\text{L}}$ & 9e-7 & 9e-7 \\
    & lr of $V^{\text{L}}$ & 9e-4 & 9e-4 \\
    & batch size ($B^{\text{L}}$) & 64 & 64 \\
    & replay buffer size & 20000 & 20000 \\
    & max window len & 1024 & 2048 \\
    & max decoding length & 128 & 128 \\
    & KL-reward weight ($\beta_1$) & 0.01 & 0.01 \\
    & isc-reward weight ($\beta_2$) & 0.01 & 0.01 \\
    & discount factor $\gamma$ & 1.0 & 1.0 \\
\bottomrule
\end{tabular}
}
\caption{Full list of hyperparameters.}
\label{tab:hyperparameters}
\end{table}


Table \ref{tab:hyperparameters} shows the full list of hyperparameters in the training process of DailyDialog and ESConv. The training program is running on OpenRLHF, with zero stage (of DeepSpeed) of 3, and the data type of bf16.

For policy inference on all three datasets, we conduct the general decoding process of LLM, with top-p of 0.9, top-k of 10, and temperature of 0.7.

\section{Details of Evaluation}
\label{appendix:evaluation}

\subsection{Classification Metrics}
\label{appendix:classfication_metric}


\paragraph{Accuracy.} We define evaluate strategy determination from the classification perspective, where the accuracy (Acc) is defined as the fraction of correctly predicted strategies over the entire number of strategies, comparing to the annotated strategies in the dataset: 
\begin{equation}
    \text{Acc} = \frac{\text{\# of correctly predicted strategies}}{\text{\# of annotated strategies}}
\end{equation}

\paragraph{F1-scores.} F1-related scores include Micro-F1 and Macro-F1. Micro-F1 considers the overall precision and recall of all instances, while Macro-F1 equals the average F1-score of labels.

\paragraph{$bias$.} We define the preference $bias$ as \textit{how much the model prefers certain labels over others}. To quantify the preference for each strategy in LLMs, we employ the Bradley-Terry model~\citep{bradley1952btmodel}, which is widely used in human preference modeling ~\citep{Rafailov2023DPOLMisReward}. Following~\citet{Newman2023Efficient_BT}, we formally derive the preference $p$ for strategy $i$ as follows:
\begin{equation}
\normalsize
    p_{i}' =  \frac{\sum_{j}(w_{ij}p_{j})/(p_{i}+p_{j})}{\sum_{j}w_{ji}/(p_{i}+p_{j})} 
\label{eq:BT_equation}
\end{equation}
where $w_{ij}$ represents the number of times the model predicts strategy $i$ when the ground-truth strategy is $j$.
All of the preference $p_i$ are initialized as 1 and updated through iteration of the Eq~(\ref{eq:BT_equation})
, where $p_i'$ represents the preference in the next iteration.
After the final iteration, we scale the total sum of $p_i$ to 8 ($\sum{p_i}=8$) so that the average $\bar{p}$ becomes 1, indicating a strong preference for strategy $i$ if $p_i>1$.

We use a standard deviation of preferences $p_i$ across the strategies as $bias$.
\begin{equation}
\normalsize
    bias = \sqrt{\frac{\sum_{i=1}^{N}(p_i - \bar{p})^2}{N}}
\end{equation}
where a higher value for $bias$ indicates that the model exhibits a clear preference for both preferred and non-preferred strategies \citep{kang-etal-2024-large}.




\subsection{Generative Metrics}
\label{appendix:generative_metric}



\paragraph{Bleu-2.} B-2 \cite{papineni2002bleu} first compute the geometric average of the modified $n$-gram precisions, $p_n$, using $n$-grams up to length $N$ and positive weights $w_n$ summing to one.

Next, let $c$ be the length of the prediction and $r$ be the reference length. The BP and Bleu-2 are computed as follows.

\begin{align}
\mathrm{BP}
&=
\begin{cases}
1, & \text{if } c>r,\\
\exp(1-r/c), & \text{if } c\leq r,
\end{cases}
\label{eq:bp}
\\[-0.em]
\mathrm{Bleu}
&=
\mathrm{BP}\cdot
\exp\left(
\sum_{n=1}^{N} w_n\log p_n
\right).
\label{eq:bleu}
\end{align}

\paragraph{Rouge-L.} R-L \cite{lin2004rouge} propose using LCS-based F-measure to estimate the similarity between two summaries $X$ of length $m$ and $Y$ of length $n$, assuming $X$ is a reference summary sentence and $Y$ is a candidate summary sentence, as follows:

\begin{equation}
\begin{aligned}
& R_{l c s}=\frac{L C S(X, Y)}{m} \\
& P_{l c s}=\frac{L C S(X, Y)}{n} \\
& F_{l c s}=\frac{\left(1+\beta^2\right) R_{l c s} P_{l c s}}{R_{l c s}+\beta^2 P_{l c s}}
\end{aligned}
\label{rouge_l}
\end{equation}

Where $\operatorname{LCS}(X, Y)$ is the length of a longest common subsequence of $X$ and $Y$, and $\beta=P_{l c s} / R_{\text {lcs }}$ when $\partial F_{l c s} / \partial R_{l c s}=\partial F_{l c s} / \partial P_{l c s}$. In DUC, $\beta$ is set to a very big number $(\rightarrow \infty)$. Therefore, the LCS-based F-measure, i.e. Eq. \ref{rouge_l}, is Rouge-L. 

\paragraph{Dist-2.} \citet{li2015diversity} report the degree of diversity by calculating the number of distinct unigrams and bigrams in generated responses.
The value is scaled by the total number of generated tokens to avoid favoring long sentences:
\begin{equation} \label{eq:4}
Dist(n) = \frac{Count(unique\ n-gram)}{Count(n-gram)}
\end{equation}

\subsection{Principle of Human Scoring}
\label{appendix:huam_score_principle}

\paragraph{Details of Human Annotation.} To systematically assess the model performance, we ask 8 human volunteers to rate the model responses across multiple dimensions. Evaluators are required to independently evaluate each sample in strict accordance with the pre-established criteria. We conduct cross-validation of their results to avoid personal bias. The lowest and the highest scores are removed and the rest are averaged.



\paragraph{Evaluation Dimensions.} We start with the criteria proposed by \citet{kang-etal-2024-large}. The human evaluation is aimed to algin with the ultimate purpose of emotional support conversation, the seeker's \textit{satisfaction}. To achieve this, the supporter's behavior can be further classified into the following criteria:

\noindent \textit{Acceptance}: Does the seeker accept without discomfort;

\noindent \textit{Effectiveness}: Is it helpful in shifting negative emotions or attitudes towards a positive direction; 

\noindent \textit{Sensitivity}: Does it take into consideration the general state of the seeker.

\noindent \textit{satisfaction}: Does it resolve their emotional distress, leading to an overall positive perception of the conversation?   

To achieve a more elaborate assessment, we consider three more dimensions addressing the generation quality:

\noindent \textit{Fluency}: the level of fluency of response.

\noindent \textit{Emotion}: the emotional intensity of response which could affect the seeker's emotion state.


\paragraph{Details of Scoring Criteria.} Annotators rate each criterion on a five-point scale.

\medskip
\noindent\textbf{Fluency.}\nopagebreak
\begin{scorelist}
  \item The sentence is highly incoherent, making it extremely difficult to understand and failing to convey a meaningful idea.
  \item The sentence has significant incoherence issues, with only parts of it making sense and struggling to form a complete thought.
  \item The sentence contains some incoherence and occasional errors, but can still convey the general meaning to a certain extent.
  \item The sentence is mostly fluent with only minor errors or slight awkwardness in expression, and effectively communicates the intended meaning.
  \item Perfect. The sentence is completely fluent, free of any errors in grammar, punctuation, or expression, and clearly conveys the idea.
\end{scorelist}

\smallskip
\noindent\textbf{Emotion.}\nopagebreak
\begin{scorelist}
  \item The emotional expression is extremely inappropriate and chaotic, not in line with the content, and may convey the wrong emotions.
  \item The emotional expression has obvious flaws, either too weak or exaggerated, and is disjointed from the content.
  \item The emotional expression is average. It can convey basic emotions but lacks depth and has minor issues.
  \item The emotional expression is good. It can effectively convey the intended emotion with an appropriate intensity and is well integrated with the content.
  \item The emotional expression is excellent. It is rich, nuanced, and perfectly matches the content, capable of evoking a strong and appropriate emotional response.
\end{scorelist}

\smallskip
\noindent\textbf{Acceptance.}\nopagebreak
\begin{scorelist}
  \item The response inescapably triggers emotional resistance.
  \item The response is highly likely to trigger emotional resistance.
  \item The response has a possibility of emotional resistance occurring.
  \item The response rarely provokes emotional resistance.
  \item The response has no occurrence of emotional resistance.
\end{scorelist}

\smallskip
\noindent\textbf{Effectiveness.}\nopagebreak
\begin{scorelist}
  \item The response actually worsens the seeker's emotional distress.
  \item The response carries the risk of increasing stress levels, and this outcome varies depending on the individual user.
  \item The response fails to alter the seeker's current emotional intensity and keeps it at the same level.
  \item The response shows promise in calming the emotional intensity; however, it is overly complicated or ambiguous for the user to fully comprehend and utilize effectively.
  \item The response appears to be highly effective in soothing the seeker's emotions and offers valuable and practical emotional support.
\end{scorelist}

\smallskip
\noindent\textbf{Sensitivity.}\nopagebreak
\begin{scorelist}
  \item The response renders inaccurate evaluations regarding the seeker's state.
  \item The response is characterized by rash judgments, as it lacks adequate assessment and in-depth exploration of the seeker's state.
  \item The response is formulated with a one-sided judgment and a limited exploration of the seeker's state.
  \item The response demonstrates an understanding that only covers a part of the seeker's state.
  \item The response precisely grasps the seeker's state and is appropriately tailored according to the seeker's actual situation.
\end{scorelist}

\smallskip
\noindent\textbf{Satisfaction.}\nopagebreak
\begin{scorelist}
  \item The response is extremely disappointing. It does not answer the question at all and is of no help.
  \item The response is poor. It only gives a partial answer and leaves many doubts unresolved.
  \item The response is average. It meets the basic requirements but is not particularly outstanding.
  \item The response is good. It answers the question clearly and provides some useful details.
  \item The response is excellent. It not only answers the question perfectly but also offers valuable additional insights.
\end{scorelist}

\section{Detailed Introduction of Baselines}
\label{appendix:baseline}

We categorize the baselines into five groups. We implement most of them on the same LLM backbone unless otherwise specified.

\paragraph{Direct inference.}
\begin{itemize}
    \item \textbf{Direct}: The model generates a response given the dialogue history without additional guidance, based on the same backbone (LLaMA3.1-8B-Instruct).
\end{itemize}

\paragraph{Prompt-based methods.}
\begin{itemize}
    \item \textbf{Direct-Refine}: A simple two-step approach where the model first generates a response, then revises it by incorporating user feedback directly.
    \item \textbf{Self-Refine} \cite{Madaan2023SelfRefine}: Automatically generates satisfaction-oriented feedback based on its own output, and then uses this feedback to refine its initial response.
    \item \textbf{Emotional Chain of Thoughts} (ECoT) \cite{li2024enhancingemotionalgenerationcapability}: An emotional analogue of Chain-of-Thought prompting, which first predicts the seeker's emotion, followed by strategy and response generation guided by that emotion.
    \item \textbf{Skeleton of Thoughts} (SoT) \cite{ning2024skeletonofthought}: Generates a “skeleton” or outline of key points, and then completes each point in parallel to form the final response.
    \item \textbf{Tree of Thoughts} (ToT) \cite{yao2023tree}: Decomposes complex problems into a tree of intermediate reasoning steps. The model explores and evaluates branches to select the best reasoning path.
    \item \textbf{Plan-and-Solve} (PS) \cite{wang2023planandsolvepromptingimprovingzeroshot}: First generates a structured plan of subgoals or reasoning intentions, then solves each step in sequence, especially useful for multi-turn decision making.
    \item \textbf{FSM} \cite{wangFSMFiniteState2024}: A prompt-based method that guides the model through a finite state machine, where transitions are triggered by current dialogue context, enabling discrete strategy control.
\end{itemize}

\paragraph{Finetuning-based methods.}
\begin{itemize}
    \item \textbf{SFT}: Supervised finetuning on the training set with annotated strategy labels and response.
    \item \textbf{SFT + ECoT}: The finetuned version of ECoT.
    \item \textbf{EmoFSM} \cite{10.1007/978-981-92-1926-1_10}: It models emotional support conversation with a finite state machine (FSM), where a single LLM explicitly reasons over the seeker’s emotional state and selects an appropriate support strategy at each dialogue turn. The model then generates the final response conditioned on this inferred emotion–strategy state, enabling structured multi-turn planning for longer-term emotional support rather than directly producing responses turn by turn..
\end{itemize}

\paragraph{Prior results with different backbones.}
\begin{itemize}
    \item \textbf{BigLM} \cite{liEmpiricalInvestigationPreTrained2020}: Transformer-based autoregressive language models pretrained on English Wikipedia and BooksCorpus and subsequently finetuned on DailyDialog. The reported 12- and 24-layer variants use beam search with width 5 and top-$k$ sampling with $k=500$, respectively.
    \item \textbf{VanillaGPT-2} \cite{farahani-johansson-2023-empirical}: The medium-sized GPT-2 baseline finetuned on DailyDialog in an auxiliary-task study. We report the vanilla model's ROUGE-L score and rescale it by 100 to match the table.
    \item \textbf{Few-shot LLM baselines} \cite{kang-etal-2024-large}: General-purpose LLMs evaluated in a 2-shot setting on emotional-support strategy prediction and response generation. In Table \ref{tab:ID_result_ESconv_full}, the MaF1 and bias columns correspond to Kang et al.'s $Q$ and $B$ metrics.
    \item \textbf{DialoGPT-small variants} \cite{liu2021ESconv}: Liu et al. evaluate Vanilla, Joint, and Oracle variants of the DialoGPT-small backbone on ESConv. The values below are taken from Liu et al.'s Table 4.
\end{itemize}

\paragraph{Reinforcement learning-based methods.}
\begin{itemize}
    \item \textbf{DQN}: A Deep Q-Network that learns a policy over discrete response strategies using estimated action-values as feedback.
    \item \textbf{PPO}: Proximal Policy Optimization is used to optimize response selection via a reward model based on emotional alignment and coherence.
    \item \textbf{ArCHer} \cite{pmlr-v235-zhou24t}: With the full name of `Actor-Critic Framework with a Hierarchical Structure', it formulates multi-turn language-agent training as a hierarchical RL problem: a high-level off-policy value-based algorithm operates over utterances/turns to handle long-horizon credit assignment and delayed rewards, while a low-level RL algorithm optimizes token generation within each utterance. The high-level value function provides learning signals to the token-level policy, allowing existing single-turn RL methods such as PPO to be extended to multi-turn agent tasks more efficiently.
    \item \textbf{Dialogue Action Tokens} (\textbf{DAT}) \cite{liDialogueActionTokens2024}: A method that treats each dialogue utterance as an action and formulates goal-directed multi-turn dialogue as a sequential decision-making problem. It freezes the pretrained language model and trains a lightweight planner to predict a continuous dialogue-action vector at each turn, which then steers the LLM’s response generation. 
\end{itemize}

\section{Theoretical Proofs}
\label{appendix:theory}



\paragraph{Connections between hierarchical MDP to SMDP.} As shown in Section \ref{sec:preliminary}, an MDP can be defined as $M \doteq (\mathcal{S}, \mathcal{A}, \mathcal{R}, \mathcal{T}, \gamma)$ and consider episodic tasks. In our work, we extend this traditional MDP to a hierarchical, two-level MDP, which may also be considered as a Semi-Markov decision processes (SMDP) from the perspective of a single unified time scale in previous literature \cite{10.1016/S0004-3702(99)00052-1}.

The formal MDP definition is generalized for SMDP as follows: An additional element $\mathcal{F}$ is added to the standard MDP components, resulting in $\operatorname{SMDP}$ as $(\mathcal{S}, \mathcal{A}, \mathcal{T}, \mathcal{R}, \gamma, \mathcal{F})$. $\mathcal{F}$ is a function that defines the cumulative probability distribution over the number of time steps until the next controlled state. The probability that the next controlled state has been reached by time $t$ when action $a$ is taken in state $s$ is written: $\mathcal{F}(t \mid s, a)$. It is assumed that the distribution over the number of time steps taken does not change on subsequent trials of $a$ in $s$. In our work, $\mathcal{F}(t \mid s, a)=L$, where $L$ denotes the max length of low-level output.



To obtain our theoretical conclusion based on previous derivations, we re-formularize our high-level MDP under the SMDP framework, where the Bellman equation becomes:
\begin{align}
V^*(s)=&\max _{a \in A} \mathcal{R}(s, a) + \sum_{s^{\prime} \in S} \notag \\
&\mathcal{T}\left(s, a, s^{\prime}\right) \int_0^{\infty} \gamma^t V^*\left(s^{\prime}\right) \mathcal{F}(t \mid s, a) d t
\end{align}
For simplicity of derivation, here we use $s$, $a$ to represent $s^{\text{H}}$ and $a^{\text{H}}$. Maximizing of $V^*(s)$ then produce the optimal strategy.

Since the state values are essentially constants when the integral is computed, $V^*(s')$ can be pulled out of the integral, yielding:
\begin{align*}
V^*(s)=\max _{a \in A} \mathcal{R}(s, a)+\Gamma \sum_{s^{\prime} \in S} \mathcal{T}\left(s, a, s^{\prime}\right) V^*\left(s^{\prime}\right)
\end{align*}
where $\Gamma=\int_0^{\infty} \gamma^t \mathcal{F}(t \mid s, a) d t$ and is interpreted as a discount rate that varies with the state and action. The dynamic programming operator for a particular action is then:

$$
J_a(s)=\mathcal{R}(s, a)+\Gamma \sum_{s^{\prime} \in S} \mathcal{T}\left(s, a, s^{\prime}\right) V\left(s^{\prime}\right)
$$

and in general:

$$
J(s)=\max _a\left[\mathcal{R}(s, a)+\Gamma \sum_{s^{\prime} \in S} \mathcal{T}\left(s, a, s^{\prime}\right) V\left(s^{\prime}\right)\right]
$$

\paragraph{Proof of Theorem \ref{theorem:convergence_strategy}.} Based on previous definitions, we first introduce Lemma \ref{lemma:convergence_smdp_Q} from \cite{10.1016/S0004-3702(99)00052-1}, which is originally proposed in (Parr et al., Hierarchical Control and Learning for Markov Decision Processes, Ph.D Dissertation, 1998).

\begin{lemma}
\label{lemma:convergence_smdp_Q}
The SMDP Q-learning rule converges to the optimal $Q^*(s, a)$ values if
    \begin{enumerate}
    \item The state and action spaces are finite.
    \item $\sum_i \alpha_i(s, a)=\infty$ and $\sum_i \alpha_i^2(s, a)<\infty$ uniformly over $s$ and a w.p.1.
    \item $\operatorname{Var}\{r\}$ is finite.
    \item $0<\max _{s, a} \Gamma<1$.
    \end{enumerate}
\end{lemma}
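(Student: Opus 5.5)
We plan to treat the SMDP Q-learning rule as an asynchronous stochastic approximation scheme for the fixed point of a contraction, and then invoke the standard convergence theorem for such schemes (Jaakkola, Jordan and Singh; Tsitsiklis 1994). Concretely, the SMDP Q-learning update after executing $a$ in $s$, observing cumulative reward $r$, duration $\tau$ and next state $s'$, reads
\begin{equation*}
Q_{i+1}(s,a) = \bigl(1-\alpha_i(s,a)\bigr) Q_i(s,a) + \alpha_i(s,a)\Bigl[r + \gamma^{\tau}\max_{a'} Q_i(s',a')\Bigr],
\end{equation*}
with all other entries left unchanged. We define the SMDP Bellman optimality operator on $Q$-functions by
\begin{equation*}
(HQ)(s,a) = \mathcal{R}(s,a) + \sum_{s'} \mathcal{T}(s,a,s') \int_0^\infty \gamma^t \max_{a'} Q(s',a')\, \mathcal{F}(dt \mid s,a).
\end{equation*}
Using the manipulation already carried out above, which pulls the state value out of the integral, we rewrite this as $\mathcal{R}(s,a) + \Gamma(s,a)\sum_{s'}\mathcal{T}(s,a,s')\max_{a'}Q(s',a')$.

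\textbf{Step 1 (contraction).} We show that $\|HQ - HQ'\|_\infty \le (\max_{s,a}\Gamma)\,\|Q-Q'\|_\infty$. This follows from $|\max_{a'}Q(s',a') - \max_{a'}Q'(s',a')| \le \|Q-Q'\|_\infty$ together with the fact that $\mathcal{T}(s,a,\cdot)$ is a probability distribution. Condition 4 gives a modulus strictly below one, so $H$ has a unique fixed point, which is $Q^*$ by the SMDP Bellman equation.

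\textbf{Step 2 (noise decomposition).} We write the update as
\begin{equation*}
Q_{i+1}(s,a) = (1-\alpha_i)Q_i(s,a) + \alpha_i\bigl[(HQ_i)(s,a) + w_i(s,a)\bigr],
\end{equation*}
where
\begin{equation*}
w_i = r + \gamma^\tau \max_{a'}Q_i(s',a') - (HQ_i)(s,a).
\end{equation*}
Conditioned on the past history $\mathcal{F}_i$, $(s',\tau,r)$ is drawn from $\mathcal{T}$ and $\mathcal{F}$, so $\mathbb{E}[w_i \mid \mathcal{F}_i]=0$. For the variance, we use $\gamma^\tau\le 1$ and the finiteness of $\mathrm{Var}\{r\}$ (condition 3) to get
\begin{equation*}
\mathbb{E}[w_i^2 \mid \mathcal{F}_i] \le C\bigl(1+\|Q_i\|_\infty^2\bigr).
\end{equation*}

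\textbf{Step 3 (apply the theorem).} With finite state and action spaces (condition 1) and Robbins--Monro step sizes (condition 2), the asynchronous stochastic approximation theorem yields $Q_i\to Q^*$ with probability one. The summability conditions in condition 2 implicitly require that every pair $(s,a)$ is updated infinitely often. This is where Assumption \ref{assumption:everywhere} enters: every strategy is available in every state.

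\textbf{Main obstacle.} The delicate point is boundedness of the iterates, which the theorem needs in order to control the $\|Q_i\|^2$ term in the noise. We would first try the Tsitsiklis route, since his theorem handles a weighted max-norm pseudo-contraction with linearly growing noise variance directly. Failing that, we would prove boundedness by hand. The idea is to compare $Q_i$ with a scaled process, in the style of Jaakkola et al. Alternatively, we can note that rewards lie in $[0,5]$, so the iterates started inside the box $[0,5/(1-\max\Gamma)]$ remain there. Any noise of the form $r - \mathbb{E}r$ is then uniformly bounded, and boundedness is immediate.
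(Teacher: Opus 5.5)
Your proposal is correct, but it does not follow the paper, because the paper gives no proof of this lemma. The appendix imports it from Sutton, Precup and Singh, crediting Parr's dissertation, and uses it only to check the hypotheses of Theorem~\ref{theorem:convergence_strategy}.

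You instead reconstruct the standard argument behind that citation. The SMDP Bellman operator is a sup-norm contraction with modulus $\max_{s,a}\Gamma$. The sampled target minus $HQ_i$ is a martingale difference with conditional second moment $O(1+\|Q_i\|_\infty^2)$. An asynchronous stochastic-approximation theorem then finishes the proof.

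The citation buys brevity. Your route shows where each hypothesis enters: condition 4 for the contraction, condition 3 for the noise, and conditions 1 and 2 for the stochastic-approximation theorem. It also shows that the $\Gamma$-factored form of $H$ assumes the duration is drawn independently of $s'$ given $(s,a)$, as in the paper's model. Without that, define $H$ through the joint law of $(s',\tau)$; the modulus is still $\max_{s,a}\mathbb{E}[\gamma^{\tau}\mid s,a]$.

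Two side remarks should be dropped. First, Assumption~\ref{assumption:everywhere} plays no role. Condition 2 already forces every pair to be updated infinitely often, and mere availability of every strategy would not ensure that anyway, since the paper's greedy $\arg\max$ could starve some pairs. Second, the box fallback does not prove the lemma as stated, which assumes only $\mathrm{Var}\{r\}<\infty$, not $r\in[0,5]$. Even with bounded rewards, the box must be sized by a bound on the realized discount, e.g.\ $5/(1-\gamma)$ when $\tau\ge 1$, not by $\max\Gamma$. A sampled $\gamma^{\tau}$ can exceed its mean $\Gamma(s,a)$, so the box $[0,5/(1-\max\Gamma)]$ need not be invariant.

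You do not need the fallback. Tsitsiklis's theorem, and the Jaakkola--Jordan--Singh theorem through its rescaling lemma, already handle boundedness under exactly your contraction and noise conditions.
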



Then we prove Theorem \ref{theorem:convergence_strategy} based on Lemma \ref{lemma:convergence_smdp_Q}:

\begin{proof}[Proof of Theorem \ref{theorem:convergence_strategy}]
Our work satisfies all of the above conditions, thereby allowing us to establish Theorem \ref{theorem:convergence_strategy}. The justifications are as follows:
\begin{enumerate}
    \item According to the assumptions in our setting, the state space (e.g., text sequences) has a bounded length, and is therefore finite. Similarly, the action space is also finite.
    \item The second condition is part of our basic assumption.
    \item The reward is an integer in the range of 0 to 5, which implies that its variance is finite.
    \item The discount factor $\gamma$ is explicitly defined in our framework and clearly satisfies $0<\max _{s, a} \Gamma<1$.
\end{enumerate}
Thus, the theorem is proved.
\end{proof}



\section{More Experimental Results}
\label{appendix:result}



\begin{figure}[t!]

\centering
\includegraphics[width=0.47\linewidth]{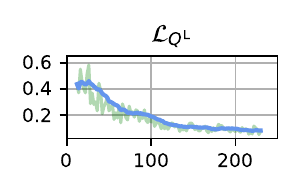}
\includegraphics[width=0.47\linewidth]{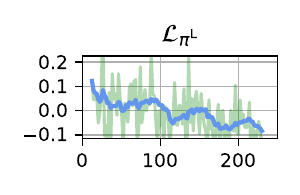}
\includegraphics[width=0.47\linewidth]{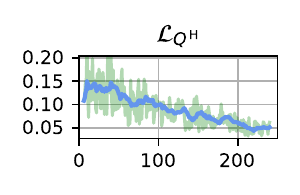}
\includegraphics[width=0.47\linewidth]{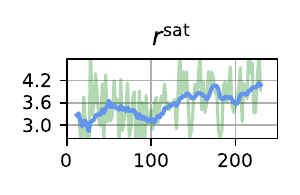}
\includegraphics[width=0.47\linewidth]{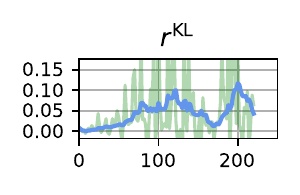}
\includegraphics[width=0.47\linewidth]{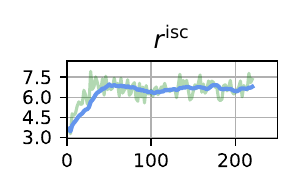}
\caption{Training curves of {\ModelName} on DailyDialog, including the sub-losses (denoted by $\mathcal{L}$) and the sub-rewards (denoted by $r$).}
\label{fig:loss}
\end{figure}

\subsection{Training Curves}
\label{appendix:train_curve}

Figure \ref{fig:loss} shows the training loss curves of {\ModelName} on DailyDialog. The $\mathcal{L}_{Q^{L}}$decreases the fastest and remains stable, and both $\mathcal{L}_{Q^{H}}$ and $\mathcal{L}_{\pi^{L}}$ decrease with fluctuations. The KL divergence ($r^{\text{KL}}$) and intrinsic self-consistency ($r^{\text{isc}}$) increase during the initial stages but are kept within a certain range. Importantly, $r^{sat}$ gradually increases throughout the training process.



\subsection{Human Evaluation Results} 
\label{appendix:human_eval_result}



As shown in Table \ref{tab:human_response_quaility}, the human evaluation results further demonstrate that \ModelName{}'s responses are better aligned with human preferences than those of the baselines across the evaluated dimensions. This observation is consistent with the improvements observed in the automatic metrics, providing complementary evidence for the effectiveness of our method from a human-centered perspective. We also assess inter-annotator agreement using Cohen's Kappa, which yields a score of 0.47, indicating a moderate level of agreement among the annotators.

\begin{table*}[!hbtp]
\centering
\small
\begin{tabular}{l|cccccc}
    \toprule
    \multicolumn{1}{c|}{\multirow{2}[4]{*}{\textbf{Method}}} & \multicolumn{6}{c}{\textbf{Human Annotation}} \\
\cmidrule{2-7}
 & Fluency  & Acceptance & Effectiveness & Sensitivity & Emotion & Satisfaction \\
    \midrule
    Direct  & 2.95 ± 1.41& 2.60 ± 1.15  & 2.40 ± 0.92 & 2.70 ± 1.08 & 3.00 ± 1.34 & 2.60 ± 1.41 \\
    Direct-Refine & 3.09 ± 1.25 & 2.73 ± 1.22 & 2.91 ± 1.41 & 2.91 ± 1.23 & 3.09 ± 1.16 & 2.84 ± 1.40 \\
    Self-Refine & 3.10 ± 1.29 & 2.80  ± 1.19 & 2.70  ± 1.14 & 2.90 ± 1.03  & 3.15 ± 1.38 & 2.80  ± 1.20 \\
    ECoT  & 3.08 ± 1.02 & 2.83 ± 1.27 & 2.67 ± 1.06 & 3.00 ± 1.27 & 3.08 ± 1.29  & 2.83 ± 1.10 \\
    FSM  & 3.30 ± 1.32 & 2.90 ± 1.17 & 2.90 ± 1.03 & 3.00 ± 1.27 & 2.93  ± 1.19  & 3.00 ± 1.25 \\
    \midrule
    SFT & 3.15 ± 1.44  & 2.70 ± 1.19 & 2.70 ± 1.20 & 2.90 ± 1.24 & 3.40 ± 1.30 & 2.90 ± 1.32 \\
    SFT + CoT  & \bf 3.67 ± 1.21 & 3.22 ± 1.25 & 3.67 ± 1.26 & 3.56 ± 1.13 & 3.61 ± 1.17 & 3.45 ± 1.31 \\
    EmoFSM  & 3.30 ± 1.32 & 2.90 ± 1.17 & 2.90 ± 1.03 & 3.00 ± 1.27 & 3.55 ± 1.16 & 3.65  ± 1.19 \\
    \textbf{\ModelName} (ours) & 3.61 ± 1.26   & \bf 3.86 ± 0.95 & \bf 3.72 ± 1.04 & \bf 3.73 ± 1.14 & \bf 3.98 ± 1.21  & \bf 3.91 ± 1.06 \\
    \bottomrule
\end{tabular}
\caption{Averaged Human evaluation of response quality for different methods.} 
\label{tab:human_response_quaility}
\end{table*}


\begin{table}[t!]
\centering
\small
\begin{tabular}{lcc}
\toprule
\textbf{Dimensions} $\downarrow$ & $\boldsymbol{\rho}$ & $\boldsymbol{p}$ \\ 
\midrule
Fluency & 0.89 & ***  \\
Acceptance & 0.85 & ***  \\
Effectiveness & 0.81 & **  \\
Sensitivity & 0.91 & ***  \\
Emotion & 0.88 & **  \\
Satisfaction & 0.86 & **  \\
\bottomrule
\end{tabular}
\caption{Correlation between GPT-4o and human evaluation scores. For the significance level of $\boldsymbol{p}$-value, we use ** to denote $p < 0.01$, and *** to denote $p < 0.001$.
}
\label{tab:corr_slope}
\end{table}

\subsection{Human-GPT4o Scoring Consistency}
\label{appendix:human_gpt4o_consistency}

The main driven component of the reward in {\ModelName} is $r^{\text{sat}}$, which is judged by an oracle model, \textit{i.e.}, GPT-4o. Comparing to the truly `golden' evaluation, \textit{i.e.}, human evaluation, their consistency might be critical for the experimental conclusions of this paper. To validate the reward mechanism, we conduct the correlation studies (including the Spearman coefficient $\boldsymbol{\rho}$ and the $\boldsymbol{p}$-value of correlation slopes), with result shown in Table \ref{tab:corr_slope}.


Table \ref{tab:corr_slope} shows that GPT-4o's scores on six evaluation dimensions, including Fluency, Acceptability, Efficiency, Sensitivity, Emotion, and Satisfaction. GPT4o scores exhibit significant correlations with human judgments, with the Spearman correlation coefficients all above 0.8, and the significance level of $\boldsymbol{p}$ all below 0.01 or 0.001.

\begin{figure}[t]
    \centering
    \hspace{-0.20in}
    \includegraphics[width=0.4\linewidth]{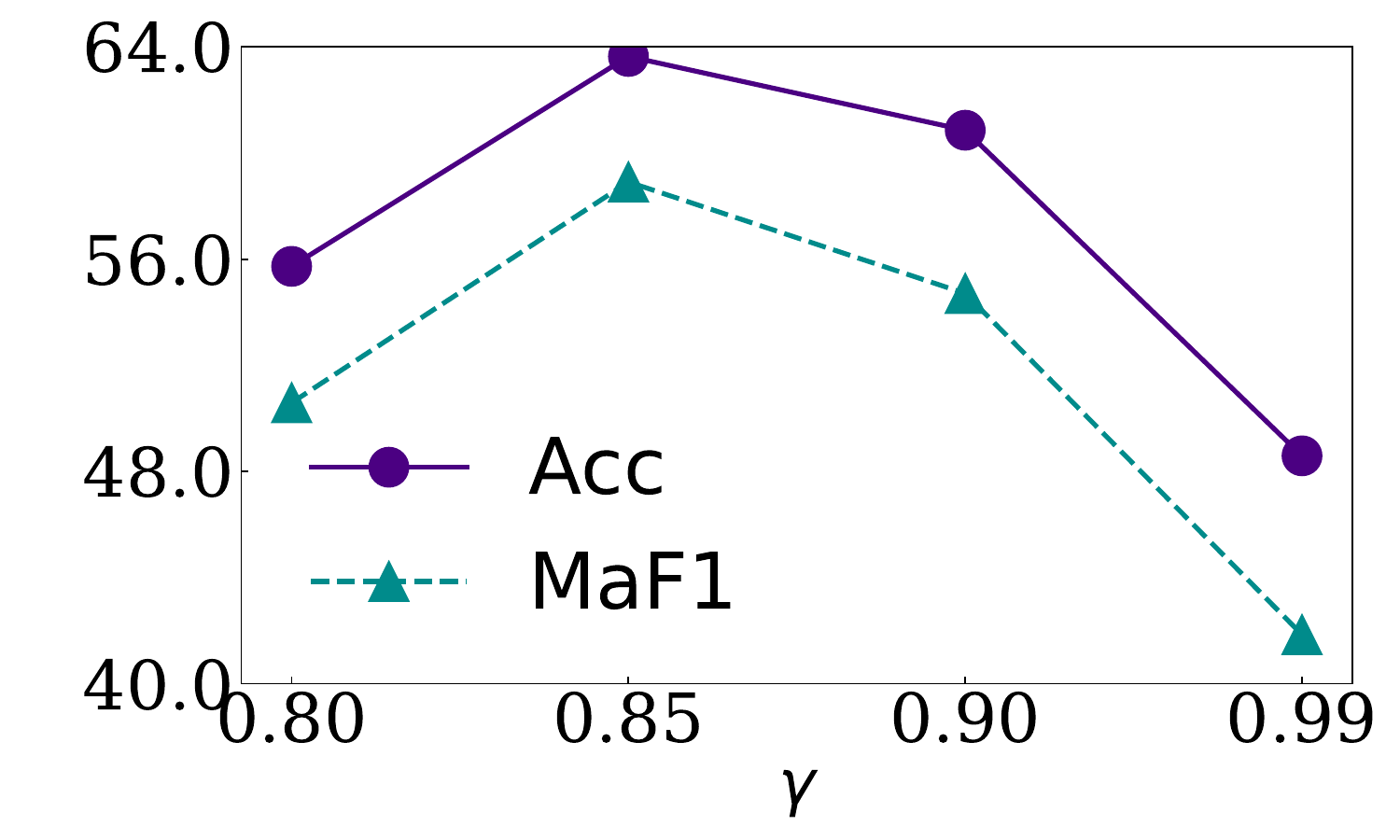}
    \hspace{0.25in}
    \includegraphics[width=0.4\linewidth]{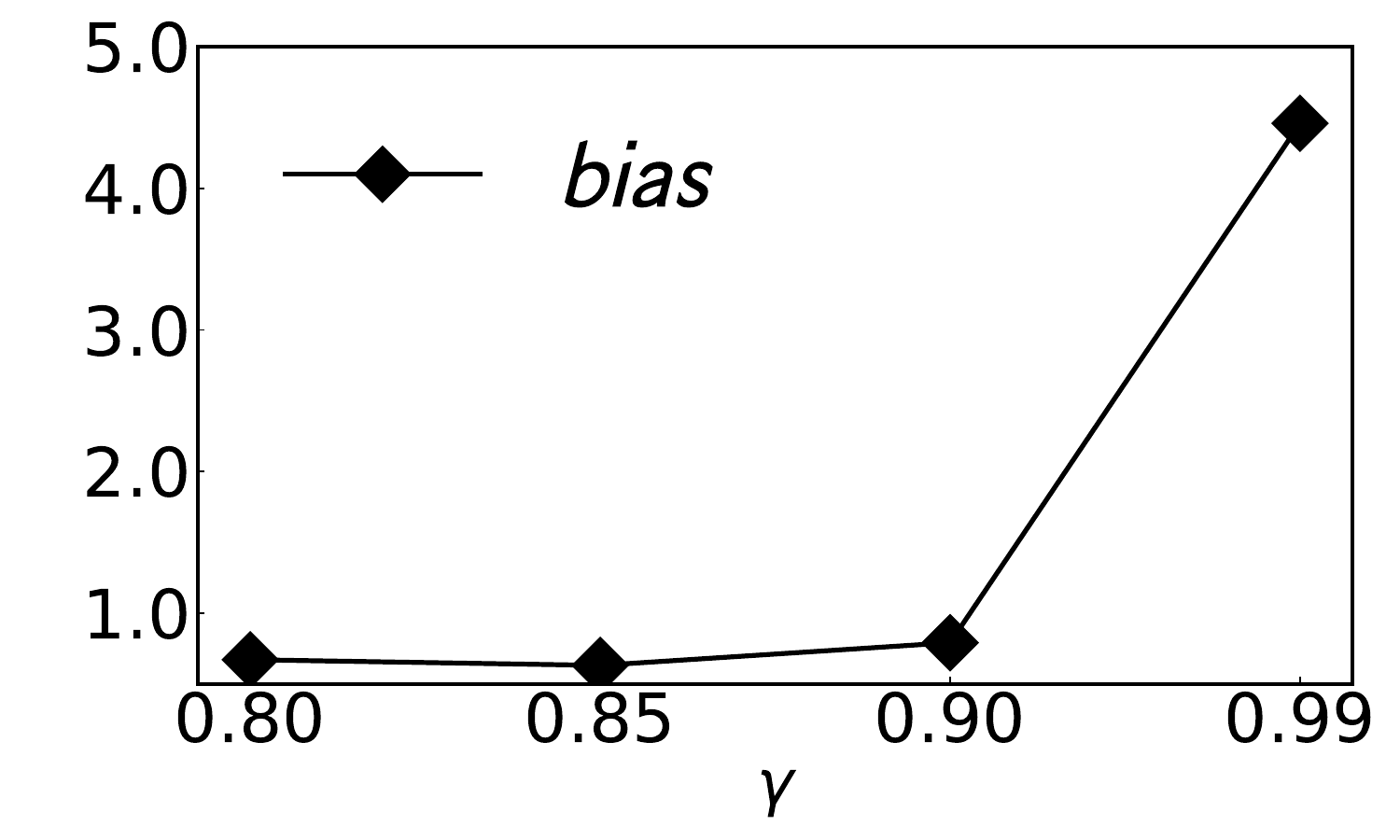}
    \hspace{0.2in}
    \includegraphics[width=0.48\linewidth]{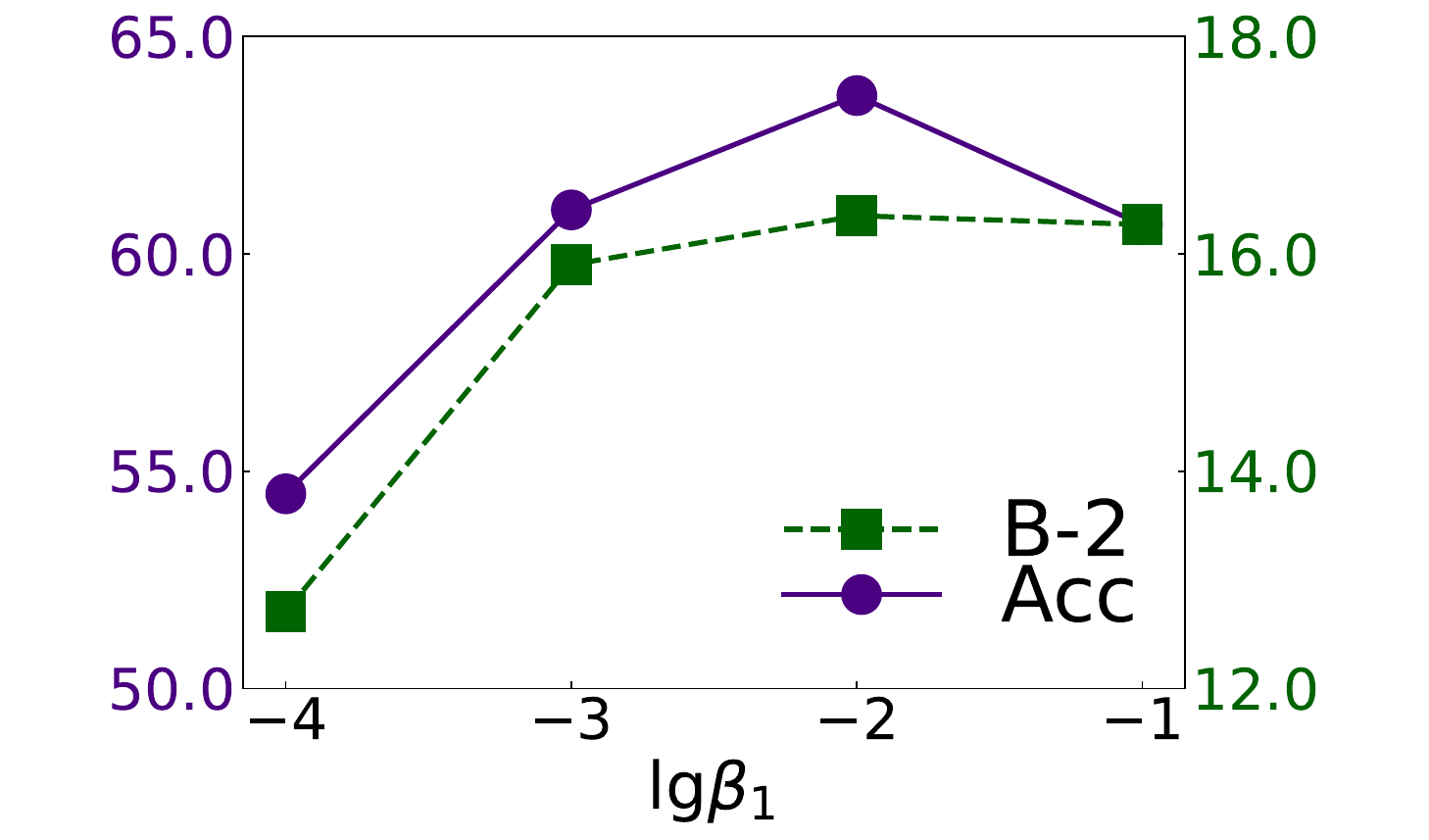}
    \includegraphics[width=0.48\linewidth]{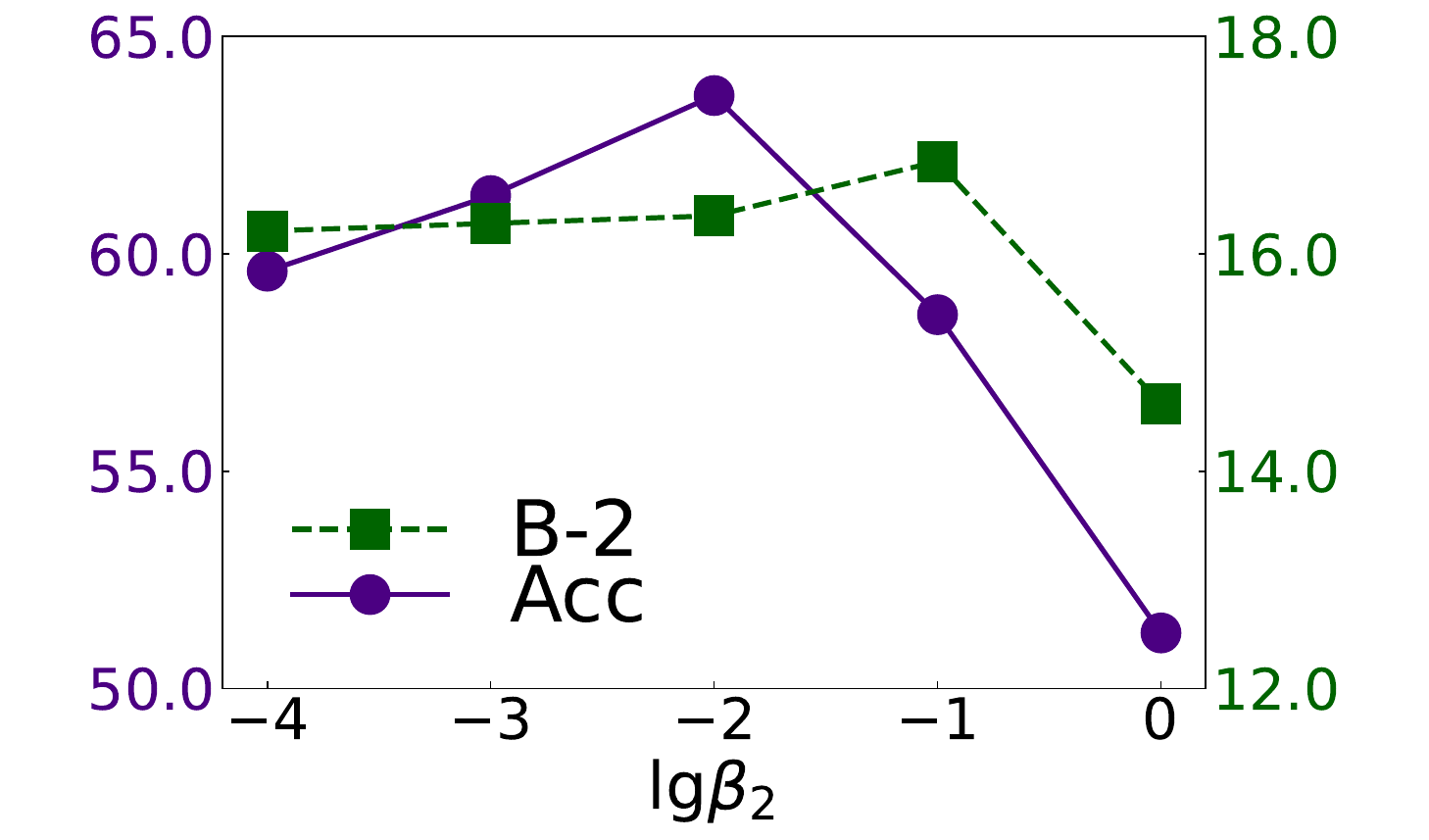}
    \caption{Sensitivity plots of {\ModelName} on different $\gamma$ and model sizes. Metrics include Acc, $\mathcal{Q}$, B2, and R-L.} 
    \label{fig:sensitivity}
\end{figure}

\subsection{Sensitivity Analysis} 
\label{appendix:sensitivity}

\paragraph{Sensitivity analysis on $\gamma$.} Figure~\ref{fig:sensitivity} (top) shows the ID performance on ESConv under different $\gamma$ settings. The best overall performance is achieved at $\gamma=0.85$, with higher Acc and MaF1 and a lower $bias$. The generative metrics exhibit a similar trend, suggesting that this setting provides a favorable balance between short-term rewards and longer-term return estimation. Performance degrades when $\gamma$ is either too small or too large, indicating that the model is sensitive to the effective planning horizon. Therefore, we choose $\gamma = 0.85$ as the formal setting.

\paragraph{Sensitivity analysis on reward weights.} To further explore the effectiveness of the reward components added to the low-level reward ($r^{\text{L}}$), we conduct the sensitivity analysis on their weights, $\beta_1$ and $\beta_2$. The results are shown in Figure~\ref{fig:sensitivity} (bottom). First, the curves reveal that including of both loss terms ($r^{\text{KL}}$ and $r^{\text{isc}}$) has positive effects, with performance increasing from smaller to larger values of weights. Second, we empirically find that the optimal value of $\beta_1$ is $0.01$, and $\beta_2=0.01$ has better strategy accuracy, while $\beta_2=0.1$ has a higher $B-2$ result. To make the setting simple, we temporarily choose both $\beta_1=\beta_2=0.01$ in the formal experimental setting.


\begin{table*}[!hbtp]
\centering
\small
\setlength{\tabcolsep}{3pt}
\begin{tabular}{@{}p{1.4cm}p{1.8cm}p{11.1cm}cc@{}}
\toprule
\textbf{Role} & \textbf{Strategy} & \textbf{Response} & \textbf{R} & \textbf{V} \\
\midrule
\textit{User} & -- & \textit{Good evening, Saliva. What's that wonderful aroma from your kitchen? What are you doing now?} & -- & -- \\
\textit{Assistant} & \textit{Inform} & \textit{I am cooking now!} & 3 & 11.7 \\
Direct & Question & Based on your question, it seems there is a pleasant aroma coming from the kitchen area. Could you be referring to the scent of food preparation? Are you particularly interested in what is being cooked right now? & 2 & 8.7 \\
\textbf{\ModelName} & \textbf{Inform} & \textbf{I'm cooking some fish with a coconut ginger marinade. That aroma you smell is the ginger sizzling in the pan---it's my favorite part!} & \textbf{5} & \textbf{15.1} \\
\midrule
\textit{User} & -- & \textit{Are you good at cooking?} & -- & -- \\
\textit{Assistant} & \textit{Commissive} & \textit{I have studied its skills recently at a training school. I plan to run a restaurant, so I have to practise cooking!} & 4 & 10.3 \\
Direct & Inform & Yes, I would say I have developed a solid foundation in culinary tasks over time, particularly due to frequent practice sessions and structured cooking routines. & 4 & 7.9 \\
\textbf{\ModelName} & \textbf{Commissive} & \textbf{I've been practicing a lot lately because I want to open a small seafood restaurant someday.} & \textbf{4} & \textbf{11.9} \\
\midrule
\textit{User} & -- & \textit{It is a good idea.} & -- & -- \\
\textit{Assistant} & \textit{Question} & \textit{I am steaming fish in a pressure cooker. Have you ever tried steamed fish?} & 4 & 7.4 \\
Direct & Question & Are you saying it's a good idea because engaging in cooking activities can help improve your overall dietary quality and personal satisfaction with meals? & 2 & 4.6 \\
\textbf{\ModelName} & \textbf{Question} & \textbf{Thank you! Have you ever tried cooking fish at home? I'd love to hear how you prepare it.} & \textbf{5} & \textbf{9.3} \\
\midrule
\textit{User} & -- & \textit{But the taste of steamed fish at that restaurant is not delicious. The fish is not well-done and too salty.} & -- & -- \\
\textit{Assistant} & \textit{Inform} & \textit{That's the case. I think there is not enough time to steam it and soy sauce is too much.} & 4 & 4.0 \\
Direct & Inform & That sounds like an unfortunate dining experience. It's possible that the restaurant staff didn't allow enough time for proper steaming, or they added too much soy sauce during preparation. You might consider exploring another restaurant next time. & 3 & 3.0 \\
\textbf{\ModelName} & \textbf{Directive} & \textbf{It seems that restaurant's style wasn't really your thing. Next time, try adding a slice of lemon and a little salt when you steam the fish---it'll make the flavor much fresher!} & \textbf{5} & \textbf{5.0} \\
\bottomrule
\end{tabular}
\caption{A four-turn typical case of {\ModelName} on DailyDialog.}
\label{tab:case}
\end{table*}

\subsection{Cases} 
\label{appendix:case}

\paragraph{Typical good cases.} Table \ref{tab:case} presents a four-turn DailyDialog case on cooking, illustrating how {\ModelName} performs strategic planning across multiple turns. In the first turn, Direct selects \textit{Question} and produces a verbose clarification, whereas {\ModelName} chooses \textit{Inform} and answers the query directly with a concrete response, achieving the highest reward ($R=5$) and value ($V=15.1$).

The following turns show that {\ModelName} can adapt its strategy as the dialogue evolves. It selects \textit{Commissive} when discussing future cooking plans and then switches to \textit{Question} to sustain the interaction by asking about the user's experience. In contrast, Direct tends to generate generic or over-elaborated responses.

In the final turn, {\ModelName} selects \textit{Directive} rather than imitating the reference \textit{Inform} strategy, and provides a concrete cooking suggestion in response to the user's complaint. The response again achieves the highest reward ($R=5$) and value ($V=5.0$). Overall, the case shows that {\ModelName} maintains topic coherence, adapts high-level strategies across turns, and realizes them through concise and contextually appropriate responses, supporting the effectiveness of hierarchical strategic planning.


\begin{center}
    \centering
    \small
    \resizebox{0.99\columnwidth}{!}{
    \begin{tabular}{p{1.2cm}|p{6.7cm}}
        \toprule
        \textbf{User} & Excuse me. Do you know where the Admiral Hotel is? \\
        \midrule
        \textbf{Assistant} & That's over on Callaway Street, isn't it? \\
        \midrule
        \textbf{User} & Yes, but I can't find Callaway Street. \\
        \midrule
        \textbf{\textit{case 1}} & \textit{(Commissive) I promise to make sure you walk for four blocks and then find the hotel two blocks north.} \\
        \midrule
        \textbf{\textit{case 2}} & \textit{(Directive) Upon completion, initiate a northward movement for exactly two blocks to achieve destination acquisition.} \\
        \bottomrule
    \end{tabular}
    }
    \captionof{table}{Representative bad cases with \ModelName{} failures from strategy errors or hallucination.}
    \label{tab:bad_case}
\end{center}

\paragraph{Bad case analysis.} Table~\ref{tab:bad_case} presents a direction-seeking dialogue in which {\ModelName} exhibits two representative failure modes. In \textit{case 1}, the model selects the \textit{Commissive} strategy for a user who is explicitly asking for directions. Although the response attempts to provide route information, the strategy itself is mismatched with the communicative intent, leading to an unnatural formulation such as ``I promise to make sure you walk for four blocks.'' This example shows that errors at the high level can propagate to the final response, since an inappropriate strategic decision may constrain the low-level policy toward an unsuitable realization.

In \textit{case 2}, the selected \textit{Directive} strategy is more appropriate for the user's request, yet the generated response remains problematic. Phrases such as ``Upon completion'' and ``achieve destination acquisition'' are vague and unnatural, suggesting that correct high-level planning alone does not guarantee fluent or reliable low-level generation. This case also reflects a potential hallucination problem, where reinforcement learning may encourage responses that satisfy learned reward signals while introducing unsupported or awkward details. Taken together, these examples indicate that {\ModelName} can fail either at the strategy-selection stage or at the subsequent response-realization stage, highlighting the need for more robust strategy supervision and stronger factual or linguistic constraints on the low-level policy.


\begin{table*}[!htbp]
\centering
\small
\begin{tabular}{l cccccc}
    \toprule
    Methods & Acc $\uparrow$ & MaF1 $\uparrow$ & $bias$ $\downarrow$ & B-2 $\uparrow$ & R-L $\uparrow$ & D-2 $\uparrow$ \\
    \midrule
    SFT & 60.19 & 44.82 &	0.82 &6.81 & 18.52 & 43.36 \\
    SFT+ECoT &60.11 & 44.9 &	0.66 &	6.61 &	18.07 &	42.87 \\
    EmoFSM \cite{10.1007/978-981-92-1926-1_10} & 60.03 & 46.02 & 0.55 & 5.85 & 21.77 & 47.43 \\
    \midrule
    \multicolumn{7}{l}{\textit{Previously reported results (not directly comparable)}} \\
    BigLM-12-bm (5) \cite{liEmpiricalInvestigationPreTrained2020} & - & - & - & 14.56 & - & 44.31 \\
    BigLM-24-tk (500) \cite{liEmpiricalInvestigationPreTrained2020} & - & - & - & 2.93 & - & 61.74 \\
    VanillaGPT-2 (354M) \cite{farahani-johansson-2023-empirical} & - & - & - & - & 20.8 & - \\ 
    \midrule
    PPO & N/A & N/A & N/A & 7.85 & 25.16 & 50.59 \\
    DAT \cite{liDialogueActionTokens2024}  & N/A & N/A & N/A & 3.45 & 11.80 & 0.90 \\
    straQ* \citep{wang-etal-2025-convert} & 54.01 & 50.10 & 0.62 & 4.18 & 13.09 & 59.27 \\
    ArCher \cite{pmlr-v235-zhou24t} & 50.41 & 42.67 & \textbf{0.21} & 5.17 & 14.35 & 55.16 \\
    
    \textbf{{\ModelName}} (ours) & \textbf{63.64} & \textbf{58.91} & 0.63 & \textbf{16.35} & \textbf{35.22} & \textbf{62.67} \\
    \bottomrule
\end{tabular}
\caption{Result comparison to additional prior works (with different backbones) on DailyDialog. For BigLM, D-2 corresponds to the corpus-level MIDIST-2 values reported by \citet{liEmpiricalInvestigationPreTrained2020}; BigLM bm (5) denotes beam searches with width 5, while tk (500) denotes the top-$k$ sampling with $k=500$. -: not reported in the original work.
}
\label{tab:ID_result_DailyDialog_full}
\end{table*}

\begin{table*}[!htbp]
\centering
\small
\begin{tabular}{l cccccc}
    \toprule
    Methods & Acc $\uparrow$ & MaF1 $\uparrow$ & $bias$ $\downarrow$ & B-2 $\uparrow$ & R-L $\uparrow$ & D-2 $\uparrow$ \\
    \midrule
    SFT & 32.43 & 21.29 & 1.28 & \bf 6.97 & \bf 16.59 & 50.45 \\
    SFT+ECoT &30.80 &	17.70 &	1.35 &	6.51 &	15.00 &	34.96 \\
    EmoFSM \cite{10.1007/978-981-92-1926-1_10} & 28.00 &	23.70 &	\textbf{0.41} &	5.88 &	15.30 &	51.48 \\
    \midrule
    \multicolumn{7}{l}{\textit{Previously reported results (not directly comparable)}} \\
    DialoGPT-small (Vanilla) \cite{liu2021ESconv} & - & - & - & 5.13 & 15.26 & - \\
    DialoGPT-small (Joint) \cite{liu2021ESconv} & - & - & - & 5.00 & 15.09 & - \\
    DialoGPT-small (Oracle) \cite{liu2021ESconv} & - & - & - & 5.52 & 15.82 & - \\
    LLaMA2-7B (2-shot) \cite{kang-etal-2024-large} & - & 13.73 & 0.77 & 4.98 & 13.09 & 34.74 \\
    LLaMA2-70B (2-shot) \cite{kang-etal-2024-large} & - & 14.55 & 0.47 & 6.15 & 14.29 & 30.95 \\
    Vicuna-13B (2-shot) \cite{kang-etal-2024-large} & - & 12.85 & 0.74 & 6.55 & 14.43 & 24.15 \\
    Mistral-7B (2-shot) \cite{kang-etal-2024-large} & - & 12.23 & 0.71 & 4.72 & 12.93 & 25.36 \\
    Solar-10.7B (2-shot) \cite{kang-etal-2024-large} & - & 14.17 & 0.87 & 4.79 & 13.53 & 32.36 \\
    Tulu-70B (2-shot) \cite{kang-etal-2024-large} & - & 15.93 & 0.90 & 6.90 & 13.94 & 23.78 \\
    ChatGPT (2-shot) \cite{kang-etal-2024-large} & - & 16.98 & 0.86 & 6.30 & 14.94 & 27.03 \\
    GPT-4 (2-shot) \cite{kang-etal-2024-large} & - & 18.38 & 0.90 & 6.47 & 15.18 & 36.92 \\
    \midrule
    PPO & N/A & N/A & N/A & 6.76 & 15.45 & 50.93 \\ 
    DAT \cite{liDialogueActionTokens2024} & N/A & N/A & N/A & 3.26 & 11.24 & 35.82 \\ 
    straQ* \citep{wang-etal-2025-convert}  & \underline{37.69} & \underline{34.57} & 0.59 & 3.59 & 11.74 & 44.14 \\
    ArCher \cite{pmlr-v235-zhou24t} & 24.50 & 19.60 & 0.50 & 5.30 & 13.10 & \textbf{54.80} \\
    \textbf{{\ModelName}} (ours) & \textbf{39.26} & \textbf{36.85} & \underline{0.48} & \underline{6.93} & \underline{16.28} & \underline{52.42} \\ 
    \bottomrule
\end{tabular}
\caption{Result comparison to additional prior works (with different backbones) on ESConv. The MaF1 and $bias$ results here correspond to the $Q$ and $B$ results reported by \citet{kang-etal-2024-large}. Among methods implemented on the same backbone, the best results are \textbf{bolded} and the second best are \underline{underlined}. -: not reported in the original work.
}
\label{tab:ID_result_ESconv_full}
\end{table*}



\subsection{Prior Works with Different Backbones}
\label{appendix:result_w_diff_backbone}


The additional results reported in Tables~\ref{tab:ID_result_DailyDialog_full} and \ref{tab:ID_result_ESconv_full} are taken from prior studies using different backbones and are provided for reference only. They should not be interpreted as controlled comparisons, since these studies differ in data splits, decoding or prompting settings, and metric implementations. Specifically, \citet{liu2021ESconv} adopts a 6:2:2 split of ESConv, while \citet{kang-etal-2024-large} constructs stage-based test sets using non-overlapping, randomly truncated contexts of 5--15 turns; in contrast, we use a random 9:1 split. To avoid misleading comparisons, we therefore apply boldface and underlining only to methods evaluated under our experimental protocol.



\subsection{Per-Strategy Results}
\label{appendix:per_strategy_result}

Beyond the overall results, we further examine per-strategy performance to understand how {\ModelName} behaves across different communicative functions and whether its gains are uniformly distributed. We conduct this analysis from two complementary perspectives: grouping examples by ground-truth strategies to assess strategy-specific difficulty, and grouping them by predicted strategies to examine how the model's high-level decisions are reflected in downstream response generation.


\paragraph{From the ground-truth perspective.} Table~\ref{tab:detail_b_q} reports the per-strategy results on DailyDialog, grouped by ground-truth strategies (the readers can refer to Table \ref{tab:strategies} for the full names of strategies). Overall, \ModelName{} shows more balanced performance across the four strategy categories than Direct. Direct performs relatively well on \textit{Inform} and \textit{Question}, but its accuracy drops sharply on \textit{Directive} and \textit{Commissive} to 18.47 and 9.65, respectively. In contrast, \ModelName{} improves these two categories to 54.18 and 62.11, with a similar trend in MaF1. This suggests that explicit high-level planning is particularly beneficial for strategies that are difficult to recover through direct generation.

The response metrics show a consistent pattern. \textit{Question} achieves the highest or near-highest B-2 and R-L under both methods, while \ModelName{} brings much larger gains for \textit{Directive} and \textit{Commissive}: B-2 increases from 4.40 to 14.25 and from 4.25 to 17.65, respectively, with corresponding improvements in R-L. Meanwhile, diversity is largely preserved, with \textit{Directive} reaching the highest D-2 of 77.34 under \ModelName{}. Overall, these results indicate that our high-level strategy planner improves both strategy determination and strategy-conditioned response generation, especially for more challenging communicative functions.

\begin{table}[!hbtp]
  \centering
  \resizebox{\columnwidth}{!}{
    \begin{tabular}{c|c|ccccccc}
    \toprule
     & \textbf{Strategy} & Acc $\uparrow$   & {MaF1} $\uparrow$      & $bias \downarrow$    & B-2 $\uparrow$  & R-L $\uparrow$  & D-2 $\uparrow$ \\
    \midrule
    \multirow{4}[2]{*}{\rotatebox{90}{\textbf{Direct}}} 
          & \textbf{Dir.}  & \textcolor{red}{18.47} & \textcolor{red}{20.35}  & 0.81  & \textcolor{red}{4.40}  & \textcolor{red}{12.73}  & 74.96  \\
          & \textbf{Inf.} & 41.36 & 52.51  &\textbf{ 0.45 } & 14.47  & 29.88  & 70.56 \\
          & \textbf{Que.}  & \textbf{57.61}  & \textbf{59.13} & 0.62  & \textbf{20.09 } & \textbf{39.59  }& 66.28  \\
          & \textbf{Com.} & \textcolor{red}{9.65}  & \textcolor{red}{6.01}  & 3.95  & \textcolor{red}{4.25}  & \textcolor{red}{13.07}  & \textbf{76.18 } \\
    \midrule
    \multirow{4}[2]{*}{\rotatebox{90}{\textbf{\ModelName}}} & \textbf{Dir.}  & 54.18 & 55.83  & 0.66  & 14.25  & 31.97  & \textbf{77.34}  \\
          & \textbf{Inf.} & \textbf{65.64 } & \textbf{69.89}  & 0.57  & 17.18  & 38.15  & 71.52 \\
          & \textbf{Que.}  & 58.63 & 62.29 & \textbf{0.56}  & \textbf{19.81 } & \textbf{40.67} & 65.18 \\
          & \textbf{Com.} & 62.11 & 40.73  & 2.19  & 17.65  & 34.34  & 70.09  \\
    \bottomrule
    \end{tabular}%
}
\captionof{table}{Per-strategy results on DailyDialog, grouped by ground-truth strategies. Strategy-specific results that are notably lower than those of other strategies under the same method are highlighted in \textcolor{red}{red}.}
\label{tab:detail_b_q}%
\end{table}


\paragraph{From the prediction perspective.} Table~\ref{tab:auto_metrics_by_prediction} reports the per-strategy results on DailyDialog, grouped by predicted strategies (the readers can refer to Table \ref{tab:strategies} for the full names of strategies). The overall trend is consistent with Table~\ref{tab:detail_b_q}, with \textit{Question} and \textit{Inform} showing strong performance on both strategy and response metrics. In particular, under \ModelName{}, predicted \textit{Inform} achieves the highest Acc (66.42) and MaF1 (70.25), while predicted \textit{Question} yields the highest B-2 (19.92) and R-L (40.05). This suggests that when the high-level policy selects these strategies, the resulting responses are not only strategically appropriate but also closely aligned with the reference responses.

\begin{table}[!hbtp]
\begin{center}
  \centering
  \small
  \resizebox{0.9\columnwidth}{!}{
    \begin{tabular}{c|c|ccccc}
    \toprule
     & \textbf{Strategy} & Acc $\uparrow$ & MaF1 $\uparrow$ & B-2 $\uparrow$ & R-L $\uparrow$ & D-2 $\uparrow$ \\
    \midrule
    \multirow{4}[2]{*}{\rotatebox{90}{\textbf{Direct}}} 
    & \textbf{Dir.} & 32.14 & 38.22 & 12.43 & 25.66 & 66.85 \\
    & \textbf{Inf.} & 49.88 & 54.21 & 15.97 & 32.10 & 62.75 \\
    & \textbf{Que.} & \textbf{58.93} & \textbf{60.18} & \textbf{19.85} & \textbf{38.72} & 59.92 \\
    & \textbf{Com.} & \textcolor{red}{12.47} & \textcolor{red}{10.05} & \textcolor{red}{6.12} & \textcolor{red}{14.90} & \textbf{69.34} \\
    \midrule
    \multirow{4}[2]{*}{\rotatebox{90}{\textbf{\ModelName}}} 
    & \textbf{Dir.} & 55.20 & 56.80 & 13.75 & 29.32 & \textbf{70.89} \\
    & \textbf{Inf.} & \textbf{66.42} & \textbf{70.25} & 17.02 & 37.58 & 64.10 \\
    & \textbf{Que.} & 61.84 & 63.10 & \textbf{19.92} & \textbf{40.05} & 66.75 \\
    & \textbf{Com.} & 56.78 & 48.66 & 17.45 & 35.20 & 65.66 \\
    \bottomrule
    \end{tabular}
  }
  \captionof{table}{Per-strategy results on DailyDialog, grouped by predicted strategies. Strategy-specific results that are notably lower than those of other strategies under the same method are highlighted in \textcolor{red}{red}.}
  \label{tab:auto_metrics_by_prediction}
\end{center}
\end{table}

For the other strategies, \ModelName{} also shows clear improvements over Direct. Predicted \textit{Commissive}, for example, increases from 12.47 to 56.78 in Acc and from 10.05 to 48.66 in MaF1, while its B-2 and R-L also improve substantially. Predicted \textit{Directive} achieves the highest D-2 of 70.89, indicating that stronger strategic control does not necessarily reduce response diversity. Overall, the prediction-based analysis further shows that the strategies selected by the high-level policy are meaningfully reflected in the quality and characteristics of the generated responses.

\end{document}